\newtheorem{theorem}{Theorem}[section]
\newtheorem{lemma}{Lemma}[subsection]
\newtheorem{remark}{Remark}[section]
\newtheorem{example}{Example}
\newcommand{\subsecref}[1]{Section~\ref{#1}}
\renewcommand{\eqref}[1]{Eq.~(\ref{#1})}
\newcommand{\onefunc}{\mathbbm{1}}
\newcommand{\stam}[1]{}
\newcommand{\ignore}[1]{}
\DeclarePairedDelimiter\floor{\lfloor}{\rfloor}
\newcommand{\on}[1]{#1_{\shortparallel}}
\newcommand{\off}[1]{#1_{\perp}}
\newcommand{\flt}[1]{#1_{\vdash}}
\newcommand{\bld}[1]{\mathbf{#1}}
\newcommand{\wt}{\Tilde{w}}
\newcommand{\At}{\Tilde{\bld{A}}}
\newcommand{\at}{\Tilde{a}}
\newcommand{\cL}{\mathcal{L}}
\newcommand{\bE}{\mathbbm{E}}
\newcommand{\inner}[2]{ #1\odot #2}
\newcommand{\Inner}[2]{ \langle #1 \; , \; #2 \rangle}
\newcommand{\iter}[2]{ #1^{(#2)}}
\DeclareMathOperator{\vect}{vec}
\newcommand{\bU}{\mathbf{U}}
\newcommand{\bI}{\mathbf{I}}
\newcommand{\ca}{{\cal A}}
\newcommand{\cd}{{\cal D}}
\newcommand{\cw}{{\cal W}}
\newcommand{\cl}{{\cal L}}
\newcommand{\cu}{{\cal U}}
\DeclareMathOperator*{\E}{\mathbb{E}}
\newcommand{\norm}[1]{\left\|#1\right\|}
\newcommand{\snorm}[1]{\|#1\|} 
\begin{document}

%

%

\twocolumn[

\aistatstitle{Adversarial Vulnerability due to On-Manifold Inseparability}

\aistatsauthor{ Rajdeep Haldar \And Yue Xing \And  Qifan Song \And Guang Lin}

\aistatsaddress{ Purdue University \And  Michigan State University \And Purdue University \And Purdue University} ]

\begin{abstract}
  Recent works have shown theoretically and empirically that redundant data dimensions are a source of adversarial vulnerability. 
  However, the inverse doesn't seem to hold in practice; employing dimension-reduction techniques doesn't exhibit robustness as expected. In this work, we consider classification tasks and
  characterize the data distribution as a low-dimensional manifold, with high/low variance features defining the on/off manifold direction.
  We argue that clean training experiences poor convergence in the off-manifold direction caused by the ill-conditioning in widely used first-order optimizers like gradient descent. The poor convergence then acts as a source of adversarial vulnerability when the dataset is inseparable in the on-manifold direction. We provide theoretical results for logistic regression and a 2-layer linear network on the considered data distribution. Furthermore, we advocate using second-order methods that are immune to ill-conditioning and lead to better robustness. We perform experiments and
  exhibit tremendous robustness improvements in clean training through long training and the employment of second-order methods, corroborating our framework. Additionally, we find the inclusion of batch-norm layers hinders such robustness gains. We attribute this to differing implicit biases between traditional and batch-normalized neural networks.
\end{abstract}

\section{Introduction}
Neural networks exhibit high classification performance and generalize well for test examples drawn from the data distribution exposed during training\citep{liu2020understanding}. However, they showcase surprising vulnerability to imperceptible perturbations to the input data, hampering the classification performance severely. The perturbed inputs exploiting the vulnerability or attacking the originally trained model are known as \emph{adversarial examples} in the literature \citep{szegedy2013intriguing,goodfellow2014explaining,madry2017towards,carlini2017towards}.

In recent years, there has been tremendous progress on the empirical side in developing state-of-the-art attacks and algorithms to defend against adversarial examples; unfortunately, the theoretical foundations explaining the existence of adversarial attacks haven't enjoyed the same pace. Understanding the founding mechanism is critical to developing inherently robust models that naturally align with the human decision-making process and can highlight the fundamental flaws in modern-day model training.

The manifold hypothesis, suggesting that the support of underlying data is a low-dimensional manifold lying in a higher-dimension ambient space, is the most promising theory pertaining to the existence of adversarial examples. Works like \cite{melamed2024adversarial,pmlr-v238-haldar24a} show empirically and theoretically that with a large number of redundant dimensions (e.g., useless background pixels for image tasks), one can generate adversarial examples of arbitrarily small magnitude. In addition, the implicit bias literature \citep{wei2019regularization,lyu2019gradient,lyu2021gradient,nacson2022stochastic} suggests clean training leads to a model that is a KKT solution of the \emph{maximum margin problem}. More generally, \cite{rosset2003margin} shows that the widely used logistic loss is a \emph{maximizing margin loss}, which extends to deep neural networks. 

The notion of a geometrically robust maximum margin classifier is inherited from the theory of Support Vector Machines (SVM). For linear models, the geometric notion of the maximum margin classifier is equivalent to solving the max-margin problem in implicit bias literature. Although the two definitions are not directly comparable for neural networks (non-linear networks), there is an equivalence of sufficiently wide neural networks and SVMs or Kernel Machines (KM) in general, accompanied by robustness certificate results \citep{chen2021equivalence,domingos2020every}. 
Based on the equivalence results and visual boundary of neural network classifiers in our experiments (Fig: \ref{fig:slow_convergence}), we expect the max-margin implicit bias of neural networks to enforce robustness and be immune to small magnitude imperceptible attacks. More discussion can be found in the third paragraph of Section \ref{sec:related}.

The prior results suggest that a cleanly trained model should exhibit robustness in the absence of redundant dimensions due to its natural inclination toward learning a max-margin classifier.
However, in practice, implementing dimension reduction techniques like PCA does not improve model robustness \citep{alemany2020dilemma,aparne2022pca}. Hence, there is a gap between what we expect and observe.

This work bridges the gap between the theory suggesting robust models in the absence of redundant dimensions and the persistent vulnerability observed in practice. The implicit bias and large margin properties of logistic loss hold at \emph{convergence} to the optimal classifier. We claim that, in practice, convergence isn't attained due to \emph{ill-conditioning} \citep{Goodfellow-et-al-2016,boyd2004convex,nesterov2013introductory} when dealing with first-order optimization methods like \emph{gradient-descent}, which adds to the vulnerability of our trained model. Throughout this paper, the default underlying loss for our arguments is the logistic loss or its multivariate counterpart, the cross-entropy loss, as it is the most natural and popular choice of loss for classification problems.

We generalize the idea of off/on-manifold dimensions beyond redundant/useful dimensions as suggested in \cite{pmlr-v238-haldar24a,melamed2024adversarial} to low/high variance features.
Even if the data is separable, if it is non-separable in the on-manifold dimensions, the optimal classifier is dependent on the off-manifold dimensions. However, convergence is significantly slower in the off-manifold direction (ill-conditioning), resulting in a suboptimal solution that is not large/maximum margin, leading to adversarial vulnerability.

Figure \ref{fig: real-life example} motivates our framework with an example. Consider a classification problem between birds and insects.  Most birds and insects can be distinguished based on wing length (high variance/on-manifold); birds, in general, have much larger wing lengths than insects with small or no wings. However, some insects might have wings larger than the smallest birds, like hummingbirds; hence, we can use the presence of a beak as an additional discrete feature (i.e., low-variance/off-manifold direction) to distinguish them further. Note that the data distribution of birds/insects is separable using both beak presence and wing length, but it isn't separable solely based on wing length. Figure \ref{fig: theoretical model} shows that even though the optimal classifier (attained at convergence) is robust, the estimated decision boundary learned due to lack of convergence in the off-manifold direction is not robust. This is caused by the ill-conditioned nature of the training, which is related to the extent of low-dimensionality, which is characterized by the ratio of on/off-manifold variances.

     \begin{figure}[h]
         \centering
         \includegraphics[width=0.45\textwidth]{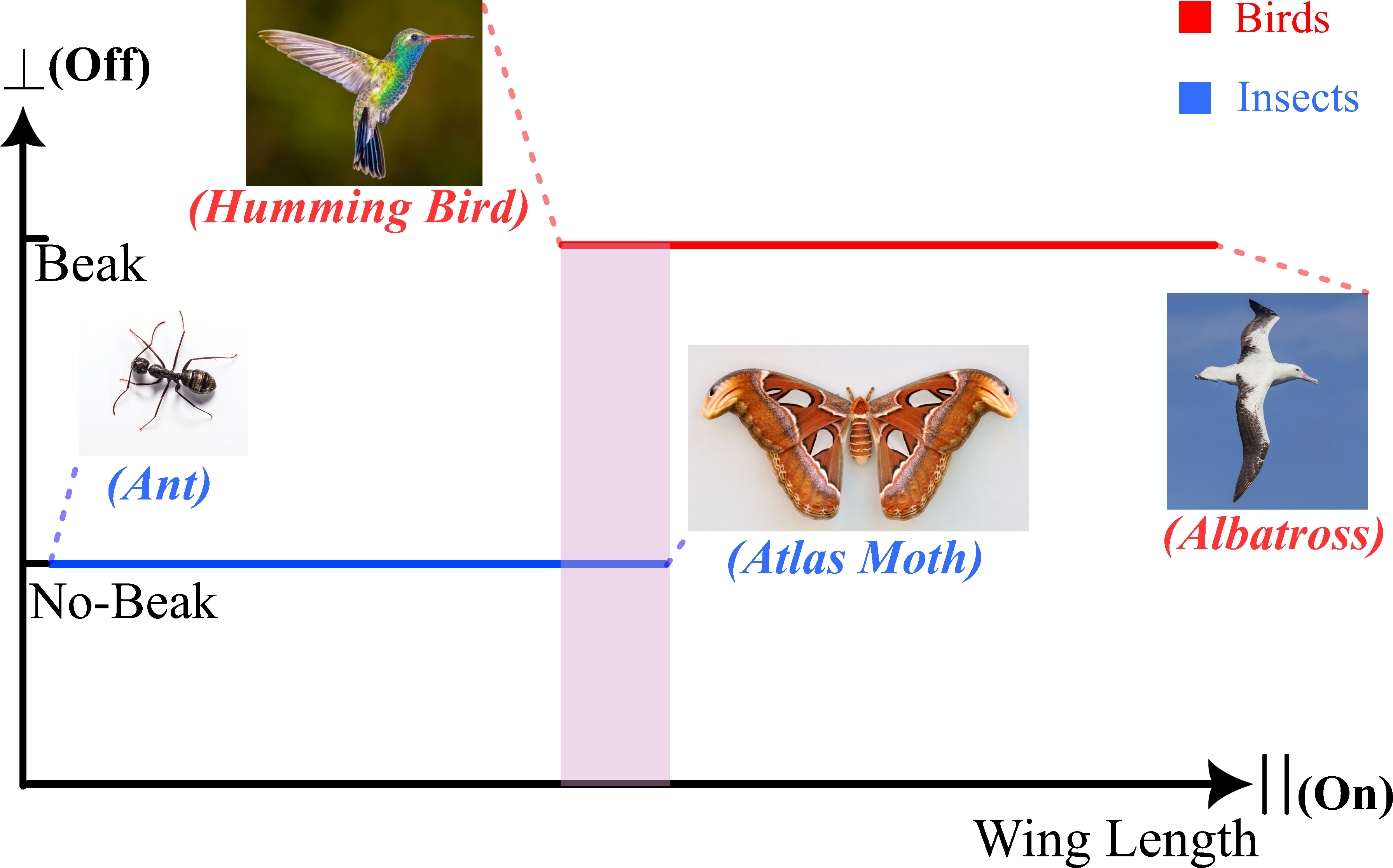}
         \caption{Binary classification between birds and insects. The purple region represents overlap in the on-manifold feature, where only the off-manifold feature can distinguish between the two classes.}
         \label{fig: real-life example}
     \end{figure}
     \begin{figure}[h]
         \centering
         \includegraphics[width=0.35\textwidth]{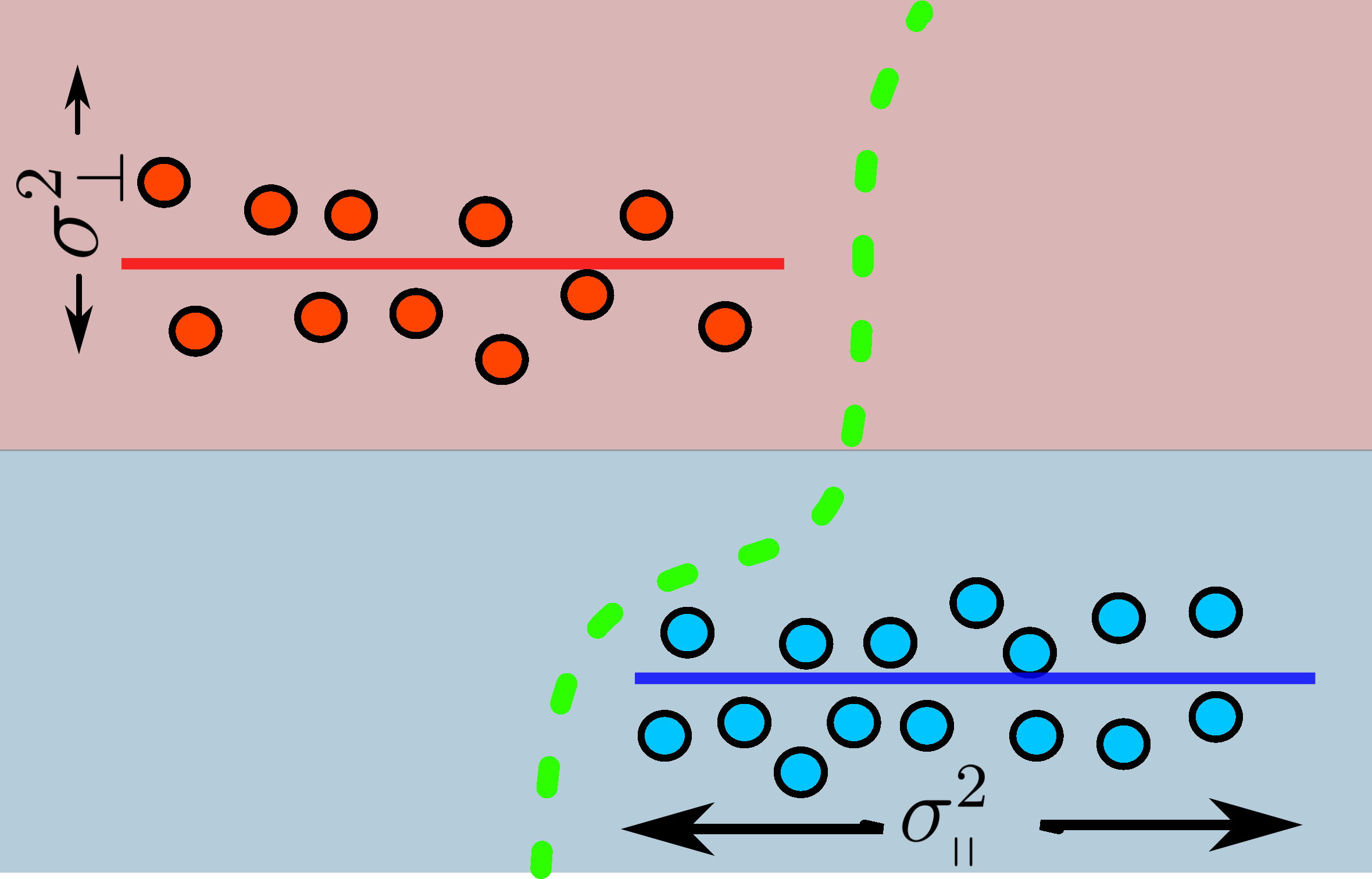}
        \caption{Optimal classifier robustly separates the ambient space into red and blue regions. The estimated decision boundary (green) is suboptimal and vulnerable even though it separates the data manifold accurately. 
        }
        \label{fig: theoretical model}
     \end{figure}

\begin{figure}[ht]
    \centering
    \includegraphics[width=0.5\textwidth]{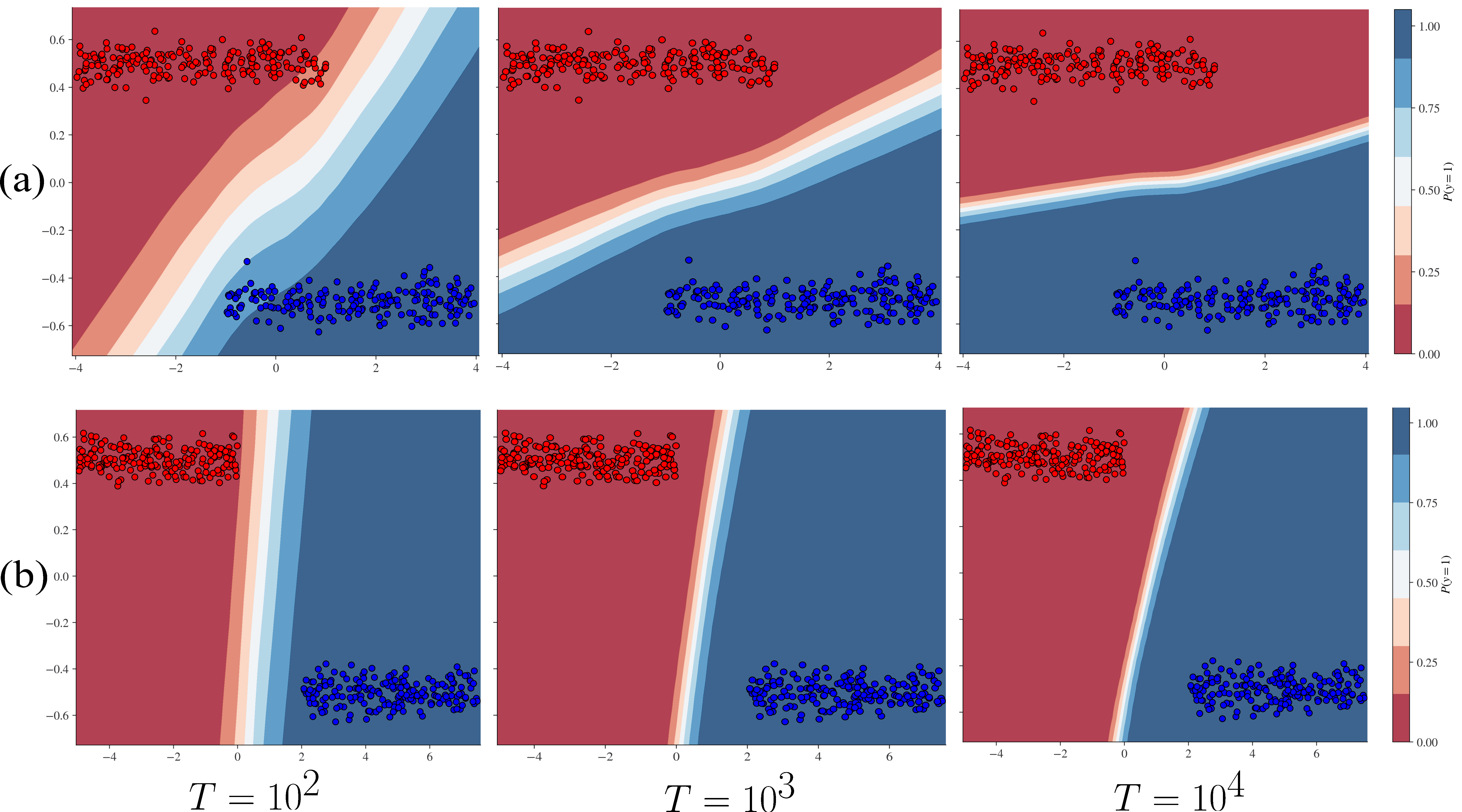}
    \caption{Neural Network estimated decision boundary for $T$ training epochs. Training and testing accuracy is 100\% for all $T\geq 10^2$. Data distribution is on-manifold (a)  inseparable (b) separable.} 
    \label{fig:slow_convergence}
\end{figure}

To further illustrate the crucial concepts of on/off manifold and low-dimensionality, we give two mathematical examples below
\begin{example}
  $x_1\sim N(0,\on\sigma^2), x_2\sim N(0,\off\sigma^2)$ be distribution of two features, as $\on\sigma\approx \off\sigma$ 
 the data resembles a 2-D disk. However, as $\sfrac{\off\sigma}{\on\sigma}\to 0$, the distribution converges to a 1-D line. The one-dimensional direction is dictated solely by the high-variance feature $x_1$ or the on-manifold feature.
\end{example}
\begin{example}
    Consider a 3-D sphere described in polar coordinates ($R,\phi,\psi$). As $\sigma_R<<\sigma_\phi\simeq\sigma_\psi$ the data distribution resembles a 2-D shell, with on-manifold direction dictated by the two angles $\psi,\phi$
 and off-manifold being $R$
 direction. Similarly, if both $\sigma_R,\sigma_\phi<< \sigma_\psi$
 , then the sphere becomes a 1-D circle, with on-manifold direction dictated only by $\psi$. The ratio of the high-variance/low variance  determines the extent of dimensionality.
\end{example}

We summarize our key contributions as the following informal theorems:
\begin{theorem}[Informal version of Theorem \ref{thm: parameter convergence}]
    Convergence to the optimal parameter is faster and independent of dimensionality in the on-manifold direction compared to the off-manifold direction. Furthermore, as dimensionality reduces, the convergence rate for the off-manifold direction worsens.
\end{theorem}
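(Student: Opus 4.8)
The plan is to set up gradient descent on the logistic (clean) loss for the Gaussian data model from Example~1 (generalized to arbitrary on/off-manifold blocks), and to track the parameter iterates $\iter{\bw}{t}$ decomposed along the on-manifold subspace and the off-manifold subspace, writing $\on{\bw}$ and $\off{\bw}$ for the two components. First I would write the population gradient $\nabla L(\bw) = -\E[\sigma(-y \Inner{\bw}{\bx})\, y\, \bx]$ and observe that, because the on- and off-manifold features are independent with covariances $\on\sigma^2 \bI$ and $\off\sigma^2 \bI$, the gradient in each block is, to leading order near the relevant scale, proportional to the corresponding variance times the current weight component (after accounting for the margin term). This is the structural fact that makes the Hessian block-diagonal with block condition number governed by $\kappa = \on\sigma^2/\off\sigma^2$; the whole argument hinges on making this precise.

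The key steps, in order: (i) Characterize the optimal direction $\bw^\star$ (the max-margin / Bayes direction) and note its off-manifold component is nonzero precisely because the data is on-manifold inseparable — this is what ties the statement to the inseparability hypothesis. (ii) Compute or bound the Hessian of $L$ at (and near) the relevant iterates, showing it is approximately block-diagonal with on-manifold curvature $\Theta(\on\sigma^2)$ and off-manifold curvature $\Theta(\off\sigma^2)$; here I would use standard logistic-loss Hessian bounds, $\sigma'(\cdot)\in(0,1/4]$, together with the independence of the feature blocks so cross terms vanish in expectation. (iii) Feed this into the standard gradient-descent convergence analysis for (strongly) convex smooth objectives: with a fixed step size $\eta$ tuned to the on-manifold curvature, the on-manifold error contracts at a dimension-independent rate $(1-\Theta(\eta\on\sigma^2))^t$, while the off-manifold error contracts only at rate $(1-\Theta(\eta\off\sigma^2))^t = (1-\Theta(\off\sigma^2/\on\sigma^2))^t$, which degrades as $\kappa\to\infty$, i.e. as dimensionality collapses. (iv) Translate the slow parameter convergence in the off-manifold block into a statement about the learned classifier's margin, connecting back to the adversarial-vulnerability narrative (though the informal theorem as stated only needs the rate comparison).

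I expect the main obstacle to be step (ii)–(iii) in the \emph{non-separable but not strongly convex} regime: logistic loss is only strictly (not strongly) convex, and without regularization the iterates can drift to infinity, so I would either (a) add an explicit $\ell_2$ penalty (or early-stopping horizon) to get genuine strong convexity and a clean condition-number statement, or (b) work in a local neighborhood of $\bw^\star$ where the loss is locally strongly convex because the data is inseparable (the overlap region guarantees the Hessian is bounded below there), and argue the iterates enter and stay in that neighborhood. The finite-sample version additionally requires a uniform concentration argument (matrix Bernstein / sub-exponential tail bounds) to pass from the population Hessian and gradient to their empirical counterparts uniformly over the relevant ball of weights; I would state this as a high-probability event and carry it through, with the sample complexity scaling polynomially in the ambient dimension but the \emph{rate} comparison remaining governed by $\kappa$.
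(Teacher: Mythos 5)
Your plan is essentially the paper's: bound the population Hessian as $c_l \Sigma_x \preceq \nabla^2_\theta \mathcal{L} \preceq c_u \Sigma_x$, where $\Sigma_x$ has on-manifold block $\on\sigma^2 \bI_d$ and off-manifold block $\off\sigma^2 \bI_g$, then run the standard gradient-descent contraction argument for smooth, strongly convex functions with step size $\preceq \on\sigma^{-2}$ dictated by the sharp direction. This yields the dimension-free $(1-\Theta(1))$ on-manifold contraction and the $(1-\Theta(\off\sigma^2/\on\sigma^2))$ off-manifold contraction, which is exactly the paper's Lemma on Lipschitz smoothness and strong convexity plus the proof of Theorem 3.2.

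Two points to correct. First, on how to get strong convexity: the paper neither adds a regularizer nor argues that iterates stay near $\theta^*$; it simply restricts to a compact parameter space $\Gamma$ as a standing model assumption (Section 3.3), so $z=\theta^T x$ stays bounded and $\sigma(z)\sigma(-z)\geq c_l>0$ holds uniformly on $\Gamma$. Your option (b) attributes the Hessian lower bound to the on-manifold overlap, but that is not the mechanism: compactness plus bounded data supplies the curvature. The overlap's sole role is to make $\off\theta^*$ nonzero, which is what makes the slow direction matter at all (your step (i)); conflating these two roles would lead you to a wrong justification. Second, the theorem also covers a two-layer linear network trained by alternating gradient descent; the paper handles this via a reparametrization lemma and an orthogonalization step on the first layer so that the same $\theta$-space Hessian bounds govern the AGD iterates. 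Your sketch only addresses the logistic case. Finally, the finite-sample concentration step you propose is unnecessary here: the theorem's statements are entirely about gradient descent on the population loss.
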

\begin{theorem}[Informal version of Theorem \ref{thm: loss convergence}] There is an error threshold, determined by the separability in the on-manifold direction, which can be quickly reached, but reducing error below that threshold is slower and depends on data dimensionality. 
\end{theorem}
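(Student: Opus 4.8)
The plan is to derive this theorem from the parameter-convergence result (\thmref{thm: parameter convergence}) via a \emph{two-phase} analysis of gradient descent on the logistic loss over the assumed on/off-manifold distribution. First I would fix the decomposition $\bw=(\on\bw,\off\bw)$ so that the margin splits as $y\bw^\top\bx = y\,\on\bw{}^\top\on\bx + y\,\off\bw{}^\top\off\bx$, and record the block structure of $\nabla^2 L$: the on-manifold block has eigenvalues of order $\on\sigma^2$ while the off-manifold block has eigenvalues of order $\off\sigma^2$, so the condition number is $\kappa=\on\sigma^2/\off\sigma^2$. I would then define the error threshold as the best logistic loss attainable using on-manifold features alone,
\[
\epsilon_{\mathrm{thr}} \;:=\; \min_{\on\bw}\, L\big((\on\bw,\zero)\big)\;=\;L\big((\on\bw^\star,\zero)\big),\qquad \on\bw^\star=\argmin_{\on\bw} L\big((\on\bw,\zero)\big),
\]
which, because the data is \emph{inseparable} in the on-manifold direction, is attained at a finite $\on\bw^\star$ and is strictly positive; crucially it is a function of the on-manifold overlap only and does not involve $\off\sigma$ or the off-manifold dimension. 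This is the quantity referred to in the statement.

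\textbf{Phase 1 (reaching the threshold quickly).} Here I would show that whenever $L(\iter\bw t) > \epsilon_{\mathrm{thr}} + \delta$ and $\off\bw$ is still small (which holds from a zero start since the off-manifold gradient is $O(\off\sigma^2)$), the on-manifold gradient block obeys a Polyak--{\L}ojasiewicz/quadratic-growth estimate $\|\nabla_{\shortparallel} L(\iter\bw t)\|^2 \gtrsim \on\sigma^2\,(L(\iter\bw t)-\epsilon_{\mathrm{thr}})$, uniformly in $\off\bw$ and in the dimension. This comes from strong convexity of the reduced on-manifold objective on sublevel sets (logistic regression on inseparable data has a genuine strictly convex minimizer) together with a Lipschitz comparison bounding how $\off\bw$ perturbs the effective margin. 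Feeding this into the smoothness descent inequality $L(\iter{\bw}{t+1}) \le L(\iter\bw t) - \tfrac{\eta}{2}\|\nabla L(\iter\bw t)\|^2$ yields linear convergence of $L(\iter\bw t)$ to $\epsilon_{\mathrm{thr}}$ at rate $1 - c\,\eta\,\on\sigma^2$, i.e. $O\!\big(\tfrac{1}{\eta\on\sigma^2}\log\tfrac1\delta\big)$ iterations, a bound independent of $\off\sigma$ and of the ambient/off-manifold dimension.

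\textbf{Phase 2 (slow descent below the threshold).} Once $L \le \epsilon_{\mathrm{thr}}$, any further decrease must be bought by growing $\off\bw$ to exploit off-manifold separability, but the off-manifold gradient is intrinsically tiny, $\|\nabla_{\perp} L(\bw)\| \lesssim \off\sigma\cdot\E[\ell'(\cdot)]$, so a descent step lowers $L$ by at most $O(\eta\,\off\sigma^2)$ times a vanishing factor. To make this quantitative I would import \thmref{thm: parameter convergence}: $\off\bw^{(t)}$ converges on the slow timescale $t/\kappa$, and below threshold the loss is, to leading order, a monotone function of the off-manifold margin $\off\bw{}^\top\off\bx$, so $L(\iter\bw t)-\epsilon^\star \gtrsim g(t/\kappa)$ for a fixed decreasing $g$; equivalently, attaining accuracy $\epsilon<\epsilon_{\mathrm{thr}}$ requires $\Omega(\kappa\cdot h(\epsilon))$ iterations, which blows up as $\off\sigma/\on\sigma\to 0$. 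Since the off-manifold subproblem is separable, I would close the estimate with the logistic implicit-bias machinery (loss $\sim 1/t$ with a constant inversely proportional to the squared margin, the margin here being $\propto \off\sigma$), making the $1/(\off\sigma^2 t)$ dependence explicit.

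For the 2-layer linear network the predictor is still linear in $\bx$, so the same on/off split applies, but the product parameterization makes $L$ non-convex; I would reduce to the linear case by tracking the induced linear predictor together with the norm-balancedness of the two layers along the trajectory, absorbing the extra factors into the rate constants while preserving the $\on\sigma$-versus-$\off\sigma$ dichotomy. The main obstacle throughout is the \emph{coupling} of the two blocks: showing the fast on-manifold dynamics of Phase 1 are not derailed by $\off\bw$, pinning $\epsilon_{\mathrm{thr}}$ down cleanly as the on-manifold-only optimum, and matching the upper bound of Phase 1 with the lower bound of Phase 2 so that ``quickly reached'' and ``slower, dimension-dependent'' refer to the \emph{same} threshold; the degeneracy of the logistic Hessian as off-manifold margins grow is the secondary technical nuisance, handled with the implicit-bias estimates.
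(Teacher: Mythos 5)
Your overall architecture — defining the threshold as the best on-manifold-only loss, showing a fast phase down to that level, then a slow dimension-dependent phase below it — matches the paper's proof of Theorem~\ref{thm: loss convergence}. Your $\epsilon_{\mathrm{thr}} := \min_{\on\theta}\cl(\on\theta,\zero)$ is exactly the paper's threshold $C$ (which they further lower-bound as $\Omega(\nu\log 2)$ via Jensen and the overlap structure), and your Phase~1 is essentially their Case~2 ($\delta>C$): an on-manifold PL/quadratic-growth estimate with constant $\propto\on\sigma^2$, fed into the smoothness descent inequality whose step size is also $\propto\on\sigma^{-2}$, so the two cancel and the rate is dimension-free. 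The coupling you worry about is handled in the paper by a dedicated coordinate-wise optimality lemma giving $\cl(\iter{\theta}{T}) \le \cl(\iter{\on{\theta}}{T},\iter{\off{\theta}}{0})$, which lets them compare the full iterate to the on-manifold-only subproblem cleanly; your ``Lipschitz comparison'' is aiming at the same thing but is the weakest link in your Phase~1 argument and would need to become something like that lemma.

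Where you genuinely diverge — and where there is a real gap — is Phase~2. You propose to close with separable-logistic implicit-bias machinery, giving a loss decay $\sim 1/(\off\sigma^2 t)$. The paper instead stays inside the compact parameter space $\Gamma$ assumed in Section~\ref{subsec: model setup}: there $\theta^*$ is finite, $\sigma(z)\sigma(-z)$ is bounded away from $0$, so the Hessian satisfies two-sided bounds $c_l\off\sigma^2 \bI_g \le \nabla^2_{\off{\theta}}\cl \le c_u\off\sigma^2\bI_g$, and a PL inequality $\snorm{\nabla_\theta\cl}^2 \ge 2c_l\off\sigma^2(\cl(\theta)-\cl(\theta^*))$ combined with step size $\preceq \on\sigma^{-2}$ gives a geometric contraction $(1 - c_l\off\sigma^2/(c_u\on\sigma^2))$ per iterate, hence the \emph{exponential} rate $T = \Omega\bigl((\on\sigma/\off\sigma)^2\log(1/\delta)\bigr)$. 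Your $1/t$ route presupposes a diverging predictor norm (the unconstrained separable regime with vanishing Hessian), which contradicts the compact $\Gamma$ assumption on which every other step of the argument rests, and it would give only a polynomial rather than exponential rate. Both routes expose the ill-conditioning factor $\on\sigma^2/\off\sigma^2$, but to be consistent with the paper's model you should replace the implicit-bias machinery with the strong-convexity/PL argument. Similarly, your treatment of the two-layer case is gestural; the paper makes it precise via an explicit reparametrization lemma plus the column-orthogonalization assumption that reduces the AGD steps to literal gradient steps in $\theta$ — without that, ``absorbing the extra factors into the rate constants'' is not justified.
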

When data is inseparable in on-manifold direction, Figure \ref{fig:slow_convergence}(a) illustrates the improvement in shape and robustness of the decision boundary over time due to slow convergence in the off-manifold direction. Figure \ref{fig:slow_convergence}(b) shows when data is on-manifold separable, the shape of the optimal decision boundary solely depends on the on-manifold direction and is stable due to fast convergence. 
 
 It is known that second-order optimization methods like Newton's method are immune to ill-conditioning. Therefore,  we promote using second-order methods (Section \ref{sec: second order}) to guarantee fast convergence and, in turn, enjoy robustness gains in cleanly trained models. We conduct experiments to support our claim (Section \ref{sec: experiments}).
\section{Related Works}\label{sec:related}
\paragraph{Adversarial Vulnerability Explained via Manifold Hypothesis:}\cite{stutz2019disentangling} introduced the distinction between \emph{on/off-manifold adversarial examples} and their role in robustness and generalization. \cite{shamir2021dimpled} showed empirically that clean-trained classifiers align with a low dimensional \emph{dimple-manifold} around the data, making them vulnerable to imperceptible perturbations. \cite{zhang2022manifold} decomposed the adversarial risk into geometric components motivated by manifold structure.  \cite{pmlr-v238-haldar24a,melamed2024adversarial} theoretically establish a relationship between the \emph{dimension gap} (induced by low-dimension data manifold) and adversarial vulnerability.
\paragraph{Attack/Defense systems based on manifold hypothesis:}\cite{xiao2022understanding} uses a generative model to learn manifold and create on-manifold attacks. \cite{jha2018detecting,lindqvist2018autogan,lin2020dual} have tried to learn the underlying manifold to detect and defend adversarial examples outside the data manifold.\\\\ 
To the best of our knowledge, our work is the first: Extending the notion of on/off manifold dimensions in terms of feature variances rather than utility; Drawing connections between on-manifold separability, ill-conditioning, and adversarial vulnerability. 
\paragraph{Implicit Bias and Robustness} 
\cite{frei2024double} show that max-margin implicit bias of neural networks aren't the most robust model. However, the order of attack strength in this work is comparable to the signal strength, and reduces to natural/on-manifold attack scenario mentioned in \cite{pmlr-v238-haldar24a}. These attacks are perceptible by humans and relatively large magnitude. We still expect a natural robustness to imperceptible attacks or small magnitude attacks in absence of redundant dimension, based on prior discussion. \cite{min2024can} shows that under poly-RelU activation implicit bias can be robust to larger attacks.
\section{Problem Setup}
\label{sec: Problem setup}
This section describes the technical setting and assumptions used for our main results in section \ref{subsec: theorems}.
\subsection{Notation}
 Throughout this paper, we will use the subscripts $\shortparallel, \perp$ to denote mathematical objects corresponding to on-manifold and off-manifold, respectively. The notation $\onefunc_d$ denotes the concatenated vector of ones of length $d$. For any two vectors $u,v$ such that $u_i\leq v_i $ $\forall i$, we can define a hypercube $[u,v]$ such that if $z\in [u,v]$ then for each $i$, $z_i\in [u_i,v_i]$. Denote $\bI_d$ as the identity matrix of $d$ dimensions.  Notation $\Omega(\cdot)$ is the usual asymptotic lower bound notation. For $a,b \in \mathbbm{R}$, $a\preceq b \iff a\leq b\cdot c$ for some $c>0$. Note that the notation $\preceq$ also works for negative sequences $a$ and $b$ (i.e., $a\preceq b$ implies $|b|\preceq |a|$ when $a$ and $b$ are negative). Denote $\odot$ as the Hadamard product.
 
\subsection{Assumption on Data Distribution}
\label{subsec: data dist}
We denote the underlying signal of the data with $x\in \mathbbm{R}^D$ and its corresponding label as $y\in \{-1,+1\}$. We study a binary classification problem with the data pair $(x,y)$.

Our signal can be decomposed as $x=(\on{x},\off{x})$, where $\on{x}\in \mathbbm{R}^d$ and $\off{x}\in \mathbbm{R}^g$ are the on/off manifold components respectively. Corresponding to each class, define $d$-dimensional hypercubes of side $l$ as $\mathcal{\on{I}}^{(-1)}=[-(l-k)\cdot \onefunc_d,k\cdot \onefunc_d]$ and $\mathcal{\on{I}}^{(+1)}=[-k\cdot \onefunc_d,(l-k)\cdot \onefunc_d]$. The parameter $k$ controls the overlap between the two hyper-cubes in the on-manifold direction. We assume that conditioned on the label the on-manifold signal is uniformly drawn from such hyper-cubes $\on{x}|y\sim \cu(\mathcal{\on{I}}^{(y)})$ with means $\on{\mu}^{(y)}=\frac{y\left(l-2k\right)}{2}\cdot\onefunc_d$ and covariance $\on{\sigma}^2\cdot\bI_d=\frac{l^2}{12}\cdot\bI_d$ respectively. Similarly, define non-overlapping symmetric hyper-cubes $\mathcal{\off{I}}^{(y)}=[\off{\mu}^{(y)}-\sqrt{3}\off{\sigma}\cdot\onefunc_g,\off{\mu}^{(y)}+\sqrt{3}\off{\sigma}\cdot \onefunc_g]$ such that $\mathcal{\off{I}}^{(-1)}\cap\mathcal{\off{I}}^{(+1)}=\emptyset$, and $\off{x}|y\sim \cu(\mathcal{\off{I}}^{(y)})$ with means $\off{\mu}^{(y)}$ and covariance $\off{\sigma^2}\cdot \bI_g$. The class probabilities themselves are binomial with probability $\pi$ i.e. $P(y=1)=\pi$ and $P(y=-1)=1-\pi$. 
Also, $\snorm{\off{\mu}^{(y)}}<\infty$ and $\nicefrac{\off{\sigma}}{\on{\sigma}}<1$.

To explain the above assumption, the data is essentially uniformly distributed over two high-dimensional rectangles corresponding to each class. Due to the non-overlapping off-manifold distributions, the two rectangles are linearly separable in the ambient $D$-dimensional space. However, there is an overlap in the on-manifold distribution controlled by $k$. The $x$ and $\off{x}$ distributions are linearly separable, but $\on{x}$ is not for $k\neq0$. The ratio of the variances $\nicefrac{\off{\sigma}^2}{\on{\sigma}^2}<1$ characterizes the low-dimensional manifold structure. 
Our data model is a mathematical representation of Figures \ref{fig: theoretical model}, \ref{fig:slow_convergence}.

\paragraph{Overlapping coefficient}
\label{sec:ovl}
In order to formulate the extent of non-separability in the on-manifold distribution we borrow the concept of \emph{overlapping coefficient} (OVL) from traditional statistics. For any two probability densities $f_1(x),f_2(x)$ the overlapping coefficient is defined as $\nu=\int_{\cd} \min \left(f_1(x),f_2(x)\right)\, dx$ where $\cd$ is the support. Note that $\nu\in [0,1]$, and essentially represents the probability of $x$ being drawn from the minority distribution. In the context of a naive Bayes classifier, $\nu$ represents the probability of misclassification or area of conflict. We can quantify non-separability for the on-manifold distribution by computing $\nu=\int \min \left(\cu(\mathcal{\on{I}}^{(-1)}),\cu(\mathcal{\on{I}}^{(+1})\right)\, dx=\left(\nicefrac{k}{l}\right)^d$. As $k$ increases, geometrically, our on-manifold hypercubes overlap more, which is consistent with $\nu$. Later, we will see how the non-separability of the on-manifold component affects our classifier learning. 

\subsection{Models}
\label{subsec: model setup}
For our binary classification problem, we work with the logistic loss $\ell(z)=\ln(1+e^{-z})$. We will denote $f(x,\gamma)$ with real output as the score predicted by our classifier for a particular data point $x$ and parameter vector $\gamma$. The expected loss of the classifier over the data distribution is $\cl(\gamma)=\E\limits_{\sim x,y}\ell(y\cdot f(x,\gamma))$. The clean training is the optimization problem $\min\limits_{\gamma\in\Gamma} \cl(\gamma)$ where $\Gamma$ is a compact parameter space. Let $\gamma^*=\arg\min\limits_{\gamma\in\Gamma} \cl(\gamma)$, then working with compact space ensures $\snorm{\gamma^*}$ is finite and makes analysis tractable.
This work considers the linear model and the two-layer linear network.
\subsubsection{Logistic Regression}
\label{subsec: logistic setup}
In the linear setup, $\gamma=\theta$ where $\theta\in \mathbbm{R}^D$ is the coefficient vector of logistic regression, $f(x,\theta)=\theta^Tx$. We implement a first-order gradient descent optimization scheme with step size $\alpha$ to obtain the minimizer in this scenario. The $t^{th}$ iteration step is as follows:
\begin{align}
    \theta^{(t+1)}=\theta^{(t)}-\alpha\nabla_{\theta}\cl(\theta^{(t)})
    \label{eqn: GD logistic}
\end{align}
\paragraph{Parameter components} Corresponding to $\on{x},\off{x}$ we have $\on{\theta},\off{\theta}$ such that $\theta^Tx=\on{\theta}^T\on{x}+\off{\theta}^T\off{x}$. Naturally, the notion of on/off manifold components translates to $\theta=(\on{\theta},\off{\theta})$.  
\subsubsection{Two Layer linear network}
\label{subsec: 2layer setup}
For the linear network, we over-parametrize the logistic regression case with $\theta=\bld{A}^Tw$, where $\bld{A}\in \mathbbm{R}^{m\times D}$ is a matrix representing the weights of the first layer with $m$-neurons, and $w\in \mathbbm{R}^m$ represents the weights of the output layer. $\gamma=(\vect{\bld{A}},w)$ and the model is $f(x,\gamma)=w^T\bld{A}x$. The parameter space is the product space $\Gamma=\ca\times \cw$ of the first and second layers. \\
As we are working with compact spaces, minimization over the product space is equivalent to sequential minimization over the first and second layer, i.e.
\begin{align*}
    \min\limits_{\gamma\in\ca\times \cw} \cl(\gamma)= \min\limits_{w\in\cw}\min\limits_{\vect{\bld{A}}\in\ca} \cl(\gamma)
\end{align*}
Consequently, we can implement an alternating gradient descent (AGD) algorithm with step sizes $\alpha_1, \alpha_2$ to obtain the minimizer. The $t^{th}$ step of AGD involves the following two gradient descent steps:
\begin{align}
\label{eqn: AGD 2 layer w}
    \text{$w$-step:  }w^{(t+1)}&=w^{(t)}-\alpha_1\nabla_{w}\cl\left(w^{(t)},\iter{\bld{A}}{t}\right)\\
\label{eqn: AGD 2 layer A}
    \text{$\bld{A}$-step:  }\bld{A}^{(t+1)}&=\bld{A}^{(t)}-\alpha_2\nabla_{\bld{A}}\cl\left(w^{(t+1)},\iter{\bld{A}} {t}\right)
\end{align}
\paragraph{Identifiability issue} For the two-layer model, the optimal parameter $\gamma^*=(\vect{\bld{A}^*},w^*)$ isn't unique, however the corresponding logistic regression coefficient $\theta^*={\bld{A}^*}^Tw^*$ is unique and identifiable. The AGD steps in equations (\ref{eqn: AGD 2 layer w}, \ref{eqn: AGD 2 layer A}) induce a sequence in $\theta$ as well, with $\iter{\theta}{2t}={\iter{\bld{A}}{t}}^T\iter{w}{t}$ and $\iter{\theta}{2t+1}={\iter{\bld{A}}{t}}^T\iter{w}{t+1}$.  In the subsequent section, we can use this identification to tackle convergence rates of AGD in terms of $\theta$ and the loss $\cl(\theta)=\cl\left(w,\bld{A}\right)$. Furthermore, the notion of on/off manifold parameters can be extended in the two-layer settings in terms of $\theta=\bld{A}^Tw=(\on{\theta},\off{\theta})$.
\paragraph{Orthogonalization} For technical simplicity, we consider an orthogonalization step in addition to the $w$ and $\bld{A}$-steps. That is, before the $t^{th}$ iteration, $\iter{\bld{A}}{t}$ is column-orthogonalized such that ${\iter{\bld{A}}{t}}^T\iter{\bld{A}}{t}=\bld{I}_D$, and $\iter{w}{t}$ is recalibrated to preserve  $\iter{\theta}{t}$, i.e., ${\iter{\bld{A}}{t}}^T\iter{w}{t}$ keeps the same after orthogonalization. However, this assumption is practical, as orthogonality improves generalizability and curbs vanishing gradient issues. \citep{li2019orthogonal,achour2022existence}
\section{Main Results}
\subsection{Motivation}
\label{subsec: Motivation}
Consider the expected gradient and hessian of the loss w.r.t. the identifiable parameter $\theta$. Denote the score as $z=f(x,\gamma)$ and $\sigma(v)={(1+\exp(-v))}^{-1}$ as the standard sigmoid function.
\begin{align}
    \label{eqn: grad}
    \nabla_\theta\cl(\gamma)&=-\E\limits_{\sim x,y} yx \sigma(-y\cdot z)\\
    \nabla^2_{\theta}\cl(\gamma)&=\E\limits_{\sim x,y}xx^T\sigma(z)\sigma(-z)
    \label{eqn: hessian}
\end{align}
 \eqref{eqn: grad} is the gradient over the distribution over $x$. Thus, for $\on{x}$ belonging to the well-separated region, the gradients are accumulated constructively; in contrast, for all $\on{x}$ belonging to the overlapping region, gradients from each class cancel out and accumulate destructively. This implies that as the overlapping or $\nu$ increases, we expect weaker gradients for learning $\shortparallel$ direction.\\
Furthermore, \eqref{eqn: hessian} showcases the hessian/curvature of the loss w.r.t. $\theta$. Notice that the curvature is implicitly dependent on the term $\E xx^T$, which essentially captures the covariance structure of the data. The variance in the $\shortparallel$ direction is controlled by $\on{\sigma}^2$ inducing a larger curvature, compared to the variance in the $\perp$ direction inducing a small or flatter curvature. When implementing first-order gradient methods, the step size is bounded by the inverse of the largest curvature $\on{\sigma}^{-2}$; As we want to change the parameters carefully, if the loss is sensitive in certain directions. 
However, this leads to slower learning in the flatter region, in this case, $\perp$ direction. Consequently, we expect faster convergence in the $\shortparallel$ direction and slower convergence in the $\perp$ direction, leading to a suboptimal solution with poor margins that is vulnerable to adversarial examples.\\ Technically, at a very high level, we bound the loss hessian based on variance matrices derived from the data structure. Subsequently, we use Taylor expansions of the loss, Lipschitz smoothness, strong convexity and PL-inequality-based arguments to derive parameter/loss convergence rates.\\
We formalize the prior intuitions in the following subsection with our main theorems.
\subsection{Theorems}
\label{subsec: theorems}
For both the logistic regression and two-layer linear network case, we denote the change in loss from $\theta^{t}\to \theta^{t+1}$ as: $\Delta \cl (t)=\cl(\iter{\theta}{t+1})-\cl(\iter{\theta}{t})$. Furthermore, the change in loss  contributed by the on/off manifold direction is denoted by $\on{\Delta} \cl (t)=\cl(\iter{\on{\theta}}{t+1},\iter{\off{\theta}}{t})-\cl(\iter{\theta}{t})$ and $\off{\Delta} \cl (t)=\cl(\iter{\on{\theta}}{t},\iter{\off{\theta}}{t+1})-\cl(\iter{\theta}{t})$  respectively.
\begin{theorem}[Progressive bounds] \label{thm: Progressive Bound}Given $(x,y)$ follows data distribution described in \subsecref{subsec: data dist}. For the $t^{th}$ iterate of $\theta$ induced by GD (\eqref{eqn: GD logistic}), $w$-step (\eqref{eqn: AGD 2 layer w}) or $\bld{A}$-step of AGD (\eqref{eqn: AGD 2 layer A}) we have:
\resizebox{0.48\textwidth}{!}{$
\begin{aligned}
\label{eqn: gradient on manifold}
    \nabla_{\on{\theta}}\cl(\iter{\theta}{t})&=-(1-\nu)\inner{\Vec{c_1}}{\onefunc_{d}\cdot\nicefrac{l-k}{2}}+\nu\inner{\Vec{c_2}}{\onefunc_d\cdot\nicefrac{k}{2}}\\
    \label{eqn: gradient off manifold}
    \nabla_{\off{\theta}}\cl(\iter{\theta}{t})&=-\pi(\inner{\vec{c_3}}{\off{\mu}^{(1)}})+(1-\pi)(\inner{(\onefunc_g-\vec{c_6})}{\off{\mu}^{(-1)}})\\
    &-\inner{(\vec{c_4}+\vec{c_5})}{\onefunc_g\cdot\nicefrac{\off{\sigma}}{4}}\nonumber
\end{aligned}$}
Furthermore, for appropriate choice of step sizes $\alpha,\alpha_1,\alpha_2\preceq\on{\sigma}^{-2}$ , we have:
\begin{align}
\label{eqn: progressive bnd directional}
    &\on{\Delta}\cl(t)\preceq-\snorm{\nabla_{\on{\theta}}\cl(\iter{\theta}{t})}^2\cdot\on{\sigma}^{-2};\\ &\off{\Delta}\cl(t)\preceq-\snorm{\nabla_{\off{\theta}}\cl(\iter{\theta}{t})}^2\cdot\on{\sigma}^{-2}\nonumber\\
    \label{eqn: progressive bnd}
    &\Delta\cl(t)\preceq-\snorm{\nabla_{\theta}\cl(\iter{\theta}{t})}^2\cdot\on{\sigma}^{-2}
\end{align}
Where $\Vec{c_i}$ are vectors dependent on $t$ with all their elements positive and $<1$; 
\end{theorem}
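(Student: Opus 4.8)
The plan is to prove the statement in two stages: first compute the exact expressions for the on- and off-manifold gradient components, then establish the progressive (descent) bounds. For the gradient formulas, I would start from \eqref{eqn: grad}, namely $\nabla_\theta\cl(\gamma)=-\E_{x,y}\,yx\,\sigma(-y z)$, and split the expectation according to the label $y$ and, for the on-manifold block, according to whether $\on{x}$ lies in the overlapping region or the well-separated region. Using the data distribution of \subsecref{subsec: data dist}, conditioning on $y=+1$ (prob. $\pi$) and $y=-1$ (prob. $1-\pi$), the on-manifold integral over the uniform hypercubes $\mathcal{\on{I}}^{(y)}$ naturally decomposes: on the overlap (mass $\nu=(k/l)^d$ in the relevant coordinates) the sigmoid-weighted contributions from the two classes nearly cancel, while on the complement (mass $1-\nu$) they reinforce. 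Evaluating $\E[\on{x}\cdot\sigma(-yz)\mid y,\text{region}]$ on each uniform piece gives coordinate-wise averages proportional to $\onefunc_d\cdot\tfrac{l-k}{2}$ and $\onefunc_d\cdot\tfrac{k}{2}$ respectively; collecting the sigmoid averages into the positive, sub-unit vectors $\vec{c_1},\vec{c_2}$ yields \eqref{eqn: gradient on manifold}. The off-manifold computation is analogous but simpler: the two classes are non-overlapping, so the expectation splits cleanly into a $\pi$-weighted term around $\off{\mu}^{(1)}$, a $(1-\pi)$-weighted term around $\off{\mu}^{(-1)}$, and a symmetric spread term of order $\off{\sigma}$ from the width of the uniform cube; the sigmoid-weighted sub-unit vectors $\vec{c_3},\dots,\vec{c_6}$ absorb the remaining factors, giving \eqref{eqn: gradient off manifold}. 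The $t$-dependence of the $\vec{c_i}$ comes simply from the fact that $z=f(x,\iter{\theta}{t})$ changes each iteration; positivity and the bound $<1$ follow because each entry is an average of sigmoid values in $(0,1)$.

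For the progressive bounds \eqref{eqn: progressive bnd directional} and \eqref{eqn: progressive bnd}, the approach is a standard descent-lemma argument combined with the Hessian bound hinted at in \subsecref{subsec: Motivation}. From \eqref{eqn: hessian}, $\nabla^2_\theta\cl=\E\,xx^T\sigma(z)\sigma(-z)$, and since $\sigma(z)\sigma(-z)\le 1/4$ and $\E\,xx^T = \diag(\on{\sigma}^2\bI_d,\off{\sigma}^2\bI_g) + \text{(mean terms)}$, the loss is $L$-smooth with $L\preceq \on{\sigma}^2$ (the on-manifold variance dominates since $\off{\sigma}/\on{\sigma}<1$ and means are bounded). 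For the full GD step \eqref{eqn: GD logistic}, Taylor expansion / the descent lemma gives $\Delta\cl(t)\le -\alpha(1-\tfrac{L\alpha}{2})\snorm{\nabla_\theta\cl}^2$, and choosing $\alpha\preceq\on{\sigma}^{-2}\preceq L^{-1}$ makes the bracket a positive constant, yielding \eqref{eqn: progressive bnd}. For the directional bounds, I would note that a $w$-step or $\bld{A}$-step of AGD, read through the induced $\theta$-sequence, moves $\theta$ only in a restricted way; in particular one needs that the partial update in the on-manifold block (resp. off-manifold block) behaves like a gradient step on the corresponding restricted loss, so the same descent lemma applied coordinate-block-wise gives $\on{\Delta}\cl(t)\preceq -\snorm{\nabla_{\on{\theta}}\cl}^2\on{\sigma}^{-2}$ and likewise for $\perp$. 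The orthogonalization assumption on $\iter{\bld{A}}{t}$ is what makes the $\bld{A}$-step translate cleanly into a controlled $\theta$-step (since ${\iter{\bld{A}}{t}}^T\iter{\bld{A}}{t}=\bld{I}_D$ keeps the effective step size comparable), and compactness of $\Gamma$ keeps all quantities like $\snorm{w}$, $\snorm{\theta^*}$ bounded so the smoothness constant is uniform.

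The main obstacle I anticipate is the middle step for the two-layer network: showing that a single $w$-step or $\bld{A}$-step of AGD induces a $\theta$-increment that can be bounded from above by $-c\,\snorm{\nabla_{\on{\theta}}\cl}^2\on{\sigma}^{-2}$ (resp. the $\perp$ analogue) rather than by the full gradient norm. The chain rule gives $\nabla_w\cl = \bld{A}\,\nabla_\theta\cl$ and $\nabla_{\bld{A}}\cl = w\,(\nabla_\theta\cl)^T$, so the induced change in $\theta=\bld{A}^Tw$ is not simply a gradient step on $\cl(\theta)$; one must use the orthogonality ${\iter{\bld{A}}{t}}^T\iter{\bld{A}}{t}=\bld{I}_D$ together with bounds on $\snorm{w^{(t)}}$ (from compactness) to argue that the effective per-block step size stays $\preceq\on{\sigma}^{-2}$, and that the block structure of $\nabla_\theta\cl$ is preserved so that the on-manifold and off-manifold contributions separate as claimed. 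Verifying that the descent inequality survives this reparametrization with the stated $\on{\sigma}^{-2}$ scaling — uniformly over iterates in the compact parameter space — is the technically delicate part; the logistic-regression case \eqref{eqn: GD logistic} is comparatively routine once the smoothness constant is pinned down.
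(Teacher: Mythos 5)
Your proposal follows essentially the same route as the paper: split the gradient expectation by label and overlap/non-overlap region to obtain the $\vec{c_i}$-coefficient formulas, bound the Hessian by a variance-structured matrix to get $L$-smoothness with $L\preceq \on{\sigma}^2$, and apply the descent lemma to each of the GD, $w$-, and $\bld{A}$-steps. You also correctly pinpoint the one genuinely delicate step — translating a $w$- or $\bld{A}$-step of AGD into an effective preconditioned gradient step on $\theta$ via the chain-rule identities $\nabla_w\cl = \bld{A}\nabla_\theta\cl$, $\nabla_{\bld{A}}\cl = w(\nabla_\theta\cl)^T$, then using orthogonalization of $\bld{A}$ and compactness of $\|w\|$ to keep the effective step size $\preceq\on{\sigma}^{-2}$ — which is precisely what the paper's reparametrization lemma (change of basis to $\tilde w = \bld{B}^T w$ and $\tilde{\bld{A}}$ with a Gram--Schmidt frame built on $w$) is designed to make rigorous; you flag this as the technically hard part but the argument you sketch is the right one.
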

 As gradient norms and the variances are positive, \eqref{eqn: progressive bnd directional} and \eqref{eqn: progressive bnd} imply that at each step of GD or AGD, the overall loss strictly decreases. In particular, the loss improves strictly in both $\perp,\shortparallel$ directions.
\paragraph{Effect of $\nu$} \eqref{eqn: gradient on manifold} decomposes the gradient in the on-manifold direction into two components corresponding to the well-separated and overlapping regions of the on-manifold distribution, respectively. Notice that the two terms are competing with each other, and as $\nu$ (overlapping coefficient) initially increases, $\snorm{\nabla_{\on{\theta}}\cl(\iter{\theta}{t})}$ tends to decrease due to cancellation. Hence, the loss improvement in $\shortparallel$ direction also diminishes (\eqref{eqn: progressive bnd directional}). With the extreme increase in $\nu$ even if $\snorm{\nabla_{\on{\theta}}\cl(\iter{\theta}{t})}$ is large, the classifier becomes agnostic of the original class direction, due to shift in the gradient direction favoring the overlapping component. 

\begin{theorem}[Parameter Convergence]
\label{thm: parameter convergence}
    Given $(x,y)$ follows data distribution described in \subsecref{subsec: data dist}. For both the logistic regression and two-layer linear network, let $T$ be the number of iterations w.r.t $\theta$ induced by GD (\eqref{eqn: GD logistic}), or $w$-step (\eqref{eqn: AGD 2 layer w}) and $\bld{A}$-step of AGD (\eqref{eqn: AGD 2 layer A}) with appropriate $\alpha,\alpha_1,\alpha_2\preceq\on{\sigma}^{-2}$; If:
    \begin{itemize}
        \item $\snorm{\iter{\on{\theta}}{T}-\on{\theta}^*}\leq \delta$ then, $T= \Omega(\log(\snorm{\iter{\theta}{0}-\on{\theta}^*}\cdot\delta^{-1}))$
        \item $\snorm{\iter{\off{\theta}}{T}-\off{\theta}^*}\leq \delta$ then, $T= \Omega((\nicefrac{\on{\sigma}}{\off{\sigma}})^2\cdot\log(\snorm{\iter{\theta}{0}-\off{\theta}^*}\cdot\delta^{-1}))$
    \end{itemize}
\end{theorem}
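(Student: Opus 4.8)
\textbf{Proof proposal for Theorem \ref{thm: parameter convergence}.}

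The plan is to show that each coordinate-block of $\theta$ evolves, near the optimum $\theta^*$, essentially like a linear fixed-point iteration whose contraction factor is governed by the local curvature in that block, and then to extract a \emph{lower} bound on the number of iterations from the \emph{slowest possible} per-step progress. The key is that a lower bound on $T$ follows from an \emph{upper} bound on how much distance-to-optimum can shrink in one step. Concretely, I would argue that for GD (and, via the identification $\iter{\theta}{2t}={\iter{\bld{A}}{t}}^T\iter{w}{t}$, for AGD) the update on the identifiable parameter satisfies $\iter{\theta}{t+1}-\theta^* = (\bI_D - \alpha H_t)(\iter{\theta}{t}-\theta^*)$ for some $H_t$ lying between $\nabla^2\cl$ evaluated along the segment (mean-value form of the gradient), so that blockwise $\norm{\iter{\on\theta}{t+1}-\on\theta^*} \geq (1-\alpha\lambda_{\max}^{\shortparallel})\norm{\iter{\on\theta}{t}-\on\theta^*}$ and $\norm{\iter{\off\theta}{t+1}-\off\theta^*} \geq (1-\alpha\lambda_{\max}^{\perp})\norm{\iter{\off\theta}{t}-\off\theta^*}$, up to cross-block coupling terms that I would need to control.

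The curvature inputs come from \eqref{eqn: hessian}: since $\sigma(z)\sigma(-z)\leq 1/4$ and the data covariance in the $\shortparallel$ block scales like $\on\sigma^2$ while in the $\perp$ block it scales like $\off\sigma^2$, the relevant Hessian blocks satisfy $\lambda_{\max}^{\shortparallel}\asymp \on\sigma^2$ (plus the fixed mean-squared contribution, which is $\Theta(1)$ in the relevant regime) and $\lambda_{\max}^{\perp}\preceq \off\sigma^2 + \norm{\off\mu}^2$. Here is the delicate point for the $\perp$ statement: because the step size is forced to be $\alpha\preceq \on\sigma^{-2}$ (dictated by the large $\shortparallel$ curvature, as emphasized in the Motivation subsection and baked into the hypotheses of Theorem \ref{thm: Progressive Bound}), the per-step contraction in the $\perp$ block is at best $1-\alpha\lambda_{\max}^{\perp} \geq 1 - c\,\off\sigma^2/\on\sigma^2$. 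Iterating, $\norm{\iter{\off\theta}{T}-\off\theta^*} \geq \norm{\iter{\off\theta}{0}-\off\theta^*}\,(1-c\,\off\sigma^2/\on\sigma^2)^{T}$; demanding this be $\leq\delta$ and taking logs gives $T\geq \frac{\log(\norm{\iter{\theta}{0}-\off\theta^*}/\delta)}{-\log(1-c\,\off\sigma^2/\on\sigma^2)} = \Omega\big((\on\sigma/\off\sigma)^2\log(\norm{\iter{\theta}{0}-\off\theta^*}\cdot\delta^{-1})\big)$, using $-\log(1-u)\asymp u$ for small $u$. For the $\shortparallel$ block the same computation with $1-\alpha\lambda_{\max}^{\shortparallel}$ bounded away from $1$ by an absolute constant (since $\alpha\asymp\on\sigma^{-2}$ and $\lambda_{\max}^{\shortparallel}\asymp\on\sigma^2$) yields the dimension-free $T=\Omega(\log(\norm{\iter{\theta}{0}-\on\theta^*}\cdot\delta^{-1}))$.

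For the two-layer network I would reduce to the logistic-regression computation through the induced $\theta$-sequence: the orthogonalization step keeps ${\iter{\bld{A}}{t}}^T\iter{\bld{A}}{t}=\bI_D$, so the $\bld{A}$- and $w$-steps act on $\theta$ as preconditioned gradient steps with a bounded, well-conditioned multiplier, and the same mean-value linearization applies with $H_t$ replaced by a matrix with the same block-scaling. The orthogonality also decouples the two layers enough that the product-space minimization collapses to the sequential one already noted in the setup, so $\theta^*$ is the same optimum as in the linear case.

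The main obstacle I anticipate is controlling the cross-block coupling in $H_t$: the Hessian \eqref{eqn: hessian} is not block-diagonal because $\E xx^T$ has off-diagonal structure whenever $\off\mu^{(y)}\neq 0$, so the clean recursion $\norm{\iter{\off\theta}{t+1}-\off\theta^*}\geq(1-\alpha\lambda_{\max}^{\perp})\norm{\iter{\off\theta}{t}-\off\theta^*}$ picks up a term proportional to $\alpha\norm{H_t^{\perp\shortparallel}}\,\norm{\iter{\on\theta}{t}-\on\theta^*}$. I would handle this by either (i) working in the eigenbasis of the limiting Hessian so that the slow direction is genuinely an invariant subspace up to $o(1)$ terms, or (ii) running the argument only once $\norm{\iter{\on\theta}{t}-\on\theta^*}$ has already contracted (which happens fast, by the $\shortparallel$ bound) so the coupling term is lower-order relative to the $\off\sigma^2/\on\sigma^2$ contraction deficit, and then absorbing it into the $\preceq$ constants. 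A secondary technical point is justifying the mean-value form of the gradient for the expected logistic loss on the compact set $\Gamma$, which follows from smoothness of $\sigma$ and boundedness of the data support, and checking that $\theta^*$ is interior (so the fixed-point analysis is not distorted by the constraint $\gamma\in\Gamma$) — this is where the assumption that $\Gamma$ is large enough to contain the unconstrained optimum, implicit in the setup, gets used.
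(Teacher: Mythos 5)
Your route is genuinely different from the paper's. The paper never linearizes the update around $\theta^*$: it (i) proves a coordinate-wise monotonicity lemma (Lemma \ref{lem: coordinate wise optimality}), (ii) upper-bounds $\snorm{\nabla_{\off{\theta}}\cl(\iter{\theta}{t})}^2$ by the loss gap $\cl(\iter{\theta}{t})-\cl(\iter{\on{\theta}}{t},\off{\theta}^*)$ via the descent lemma, (iii) lower-bounds $\nabla_{\off{\theta}}\cl(\iter{\theta}{t})^T(\iter{\off{\theta}}{t}-\off{\theta}^*)$ via blockwise strong convexity ($c_l\off{\sigma}^2\bI_g$ from Lemma \ref{lem:lipschitz/strong convex}), and combines these in the expansion of $\snorm{\iter{\off{\theta}}{t+1}-\off{\theta}^*}^2$ to obtain the contraction factor $1-c_l\off{\sigma}^2/(c_u\on{\sigma}^2)$. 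Note the direction of the two arguments: the paper bounds the distance to $\off{\theta}^*$ from \emph{above} and reads off a $T$ that \emph{suffices}; you bound it from \emph{below} to show that such a $T$ is \emph{necessary}. Your reading matches the literal $\Omega(\cdot)$ in the statement, so your plan, if completed, proves a complementary (necessity) version of what the paper actually establishes; the two approaches are not interchangeable, and each buys something the other does not.

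The gap is exactly where you flag it, and it is load-bearing for your direction while being entirely absent from the paper's. The paper only ever uses the Loewner sandwich $c_l\Sigma_x\prec\nabla^2_{\theta}\cl\prec c_u\Sigma_x$ and its diagonal-block versions; it never needs the Hessian to respect the $\shortparallel/\perp$ splitting, because loss values and blockwise gradients are insensitive to the off-diagonal blocks. In your recursion the cross term $\alpha\snorm{H_t^{\perp\shortparallel}}\,\snorm{\iter{\on{\theta}}{t}-\on{\theta}^*}$ competes with a per-step contraction deficit of only $O(\alpha\off{\sigma}^2)=O(\off{\sigma}^2/\on{\sigma}^2)$, so even a modest coupling can destroy the lower bound: over the $T\asymp(\on{\sigma}/\off{\sigma})^2$ steps you are trying to certify, the accumulated coupling is of order $\alpha\snorm{H^{\perp\shortparallel}}\sum_t\snorm{\iter{\on{\theta}}{t}-\on{\theta}^*}$, and your mitigation (ii) only makes each summand small after a burn-in, not the sum small relative to $\snorm{\iter{\off{\theta}}{0}-\off{\theta}^*}$ uniformly as $\off{\sigma}\to0$. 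To close it you need a quantitative bound $\snorm{H_t^{\perp\shortparallel}}\preceq\on{\sigma}\off{\sigma}$ (available from the conditional independence of $\on{x}$ and $\off{x}$ given $y$ plus the rank-one mean term, in the same way the paper absorbs $\E x\,\E x^T$), combined with the geometric decay of $\snorm{\iter{\on{\theta}}{t}-\on{\theta}^*}$ so the sum telescopes to $O(\off{\sigma}/\on{\sigma})$. A secondary point: your $\lambda_{\max}^{\perp}\preceq\off{\sigma}^2+\snorm{\off{\mu}}^2$ must be $\asymp\off{\sigma}^2$ for the claimed rate; the paper forces this by writing $\snorm{\mu}=\off{c}\off{\sigma}$ and hiding $\off{c}$ in $c_u$, and you would have to make the same (non-uniform in $\off{\sigma}$) concession.
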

    
Theorem \ref{thm: parameter convergence} provides the convergence rate in terms of the identifiable parameter $\theta$ in both $\perp, \shortparallel$ directions. The convergence rate in the $\shortparallel$-direction is independent of the dimensionality, whereas for $\perp$-direction the rate depends on ${(\nicefrac{\off{\sigma}}{\on{\sigma}})}^{-1}$. Note that as $\nicefrac{\off{\sigma}}{\on{\sigma}}\to 0$ or $\off{\sigma}=0$ ($\off{x}$ follows a discrete distribution), the data distribution of $x$ becomes a $d$-dimensional manifold immersed in $D$-dimension space and the time required for convergence in $\perp$-direction blows to $\infty$.

\begin{theorem}[Loss Convergence]
\label{thm: loss convergence}
    Given $(x,y)$ follows data distribution described in \subsecref{subsec: data dist}. For both the logistic regression and two-layer linear network, let $T$ be the number of iterations w.r.t $\theta$ induced by GD (\eqref{eqn: GD logistic}), or $w$-step (\eqref{eqn: AGD 2 layer w}) and $\bld{A}$-step of AGD (\eqref{eqn: AGD 2 layer A}) with appropriate $\alpha,\alpha_1,\alpha_2\preceq\on{\sigma}^{-2}$. If $\theta^*$ is the optimal solution, such that $\left(\cl(\iter{\theta}{T})- \cl(\theta^*)\right)<\delta$; then:
    \begin{itemize}
        \item  $T=\min (r_1,r_2)$ if $\delta> C$.
        \item  $T= r_2$ if $\delta< C$.
    \end{itemize}
    where $r_1=\Omega(\log(|\cl(\iter{\theta}{0})-\cl(\theta^*)|\cdot(\delta-C)^{-1}))$, $r_2=\Omega((\nicefrac{\on{\sigma}}{\off{\sigma}})^2\cdot\log(|\cl(\iter{\theta}{0})-\cl(\theta^*)|\cdot\delta^{-1}))$ and $C=\Omega(\nu\log 2)$. 
\end{theorem}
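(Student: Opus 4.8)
The plan is to run a two-phase (fast/slow) analysis driven by the directional descent inequalities of Theorem~\ref{thm: Progressive Bound} together with block-wise curvature bounds on the Hessian \eqref{eqn: hessian}, and then to convert the resulting loss decrease into an iteration count via a (restricted) Polyak--\L ojasiewicz argument, exactly as foreshadowed at the end of Section~\ref{subsec: Motivation}.

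\textbf{Setup and curvature bounds.} First I would estimate the two diagonal blocks of $\nabla^2_\theta\cl$. From the data model of Section~\ref{subsec: data dist}, $\E[xx^T]$ has an on-manifold block of order $\on\sigma^2$ (plus a rank-one mean term) and an off-manifold block of order $\off\sigma^2$; since the sigmoid weight $\sigma(z)\sigma(-z)$ in \eqref{eqn: hessian} is bounded and bounded away from $0$ on the compact parameter region reachable by GD/AGD, this gives $\nabla^2_{\on\theta}\cl \asymp \on\sigma^2$ and $\nabla^2_{\off\theta}\cl \asymp \off\sigma^2$. Hence the global smoothness constant is $L=\Theta(\on\sigma^2)$, consistent with the step-size cap $\alpha,\alpha_1,\alpha_2\preceq\on\sigma^{-2}$, while the off-manifold block is only $\Theta(\off\sigma^2)$-smooth and, locally near its partial minimizer, $\Omega(\off\sigma^2)$-strongly convex. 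For the two-layer network I would carry everything through the identifiable parameter $\theta=\bld A^T w$, using the orthogonalization step to argue the AGD updates induce effective GD-like steps on $\theta$ with step sizes of the same order, so the same curvature bounds apply.

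\textbf{Fast on-manifold phase ($r_1$ and the threshold $C$).} Restricted to the on-manifold coordinates the ratio of strong convexity to smoothness is $\Theta(1)$ (both $\Theta(\on\sigma^2)$), so combining $\on\Delta\cl(t)\preceq-\snorm{\nabla_{\on\theta}\cl(\iter{\theta}{t})}^2\on\sigma^{-2}$ from Theorem~\ref{thm: Progressive Bound} with a PL inequality in that block yields a dimension-independent linear contraction of the loss toward the partial minimizer over $\on\theta$ with the current $\off\theta$ held fixed. I would then show this partial minimum cannot drop below a floor $\cl(\theta^*)+\Omega(\nu\log 2)$: on the on-manifold overlap region (probability $\nu$, cf.\ Section~\ref{sec:ovl}) the two class-conditional laws coincide, so no $\on\theta$ drives the loss there to zero, and while $\off\theta$ is still small its contribution on that region is $\approx\log 2$; hence the loss plateaus at $C=\Omega(\nu\log 2)$ above the optimum. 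This gives, whenever $\delta>C$, convergence to within $\delta$ in $r_1=\Omega(\log(|\cl(\iter{\theta}{0})-\cl(\theta^*)|\cdot(\delta-C)^{-1}))$ steps.

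\textbf{Slow off-manifold phase ($r_2$) and assembling the cases.} To fall below $C$ one must grow $\off\theta$ toward $\off\theta^*$, i.e.\ separate the overlap region using the off-manifold coordinates. Here $\off\Delta\cl(t)\preceq-\snorm{\nabla_{\off\theta}\cl(\iter{\theta}{t})}^2\on\sigma^{-2}$ together with the $\Omega(\off\sigma^2)$ PL/strong-convexity constant in that block, but with the step size still capped at $\Theta(\on\sigma^{-2})$, produces a per-step contraction factor $1-\Theta(\off\sigma^2/\on\sigma^2)$, hence convergence to within $\delta$ of the optimum in $r_2=\Omega((\on\sigma/\off\sigma)^2\log(|\cl(\iter{\theta}{0})-\cl(\theta^*)|\cdot\delta^{-1}))$ steps --- a bound valid for all $\delta$. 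Finally: for $\delta<C$ the fast phase alone is insufficient, so $T=r_2$; for $\delta>C$ one may stop after the fast phase or run the universal slow rate, giving $T=\min(r_1,r_2)$. The hardest part will be making the two phases rigorous despite their coupling --- the ``floor'' $C$ is not a fixed constant but itself shrinks as $\off\theta$ grows, and GD does not cleanly split into an on-manifold stage followed by an off-manifold stage. I expect to handle this by working with the partial minimizers $\on\theta^*(\off\theta)$ and $\off\theta^*(\on\theta)$ in a block-PL argument, bounding the cross-terms via the Lipschitz dependence of the on-manifold partial minimum on $\off\theta$ and the smallness of $\snorm{\nabla_{\on\theta}\cl}$ once that block has equilibrated; controlling these cross-terms, and verifying local strong convexity of the off-manifold block on the compact domain $\Gamma$ (in particular whether $\theta^*$ lies in its interior), is where the real work lies, with the transfer from $\theta$-dynamics to the two-layer $(\vect{\bld A},w)$ dynamics --- for which the orthogonalization assumption is essential --- a secondary obstacle.
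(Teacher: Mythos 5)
Your high-level plan is essentially the paper's: block curvature bounds ($\nabla^2_{\on\theta}\cl\asymp\on\sigma^2$, $\nabla^2_{\off\theta}\cl\asymp\off\sigma^2$ as in Lemma~\ref{lem:lipschitz/strong convex}), a universal PL argument with contraction $1-\Theta(\off\sigma^2/\on\sigma^2)$ giving $r_2$, the identification of the floor $C=\Omega(\nu\log 2)$ as the minimum loss achievable when $\off\theta$ is frozen, and a second, dimension-free PL argument in the $\on\theta$-block giving $r_1$ when $\delta>C$. You also correctly locate the real difficulty: GD does not run in clean phases, and the on-manifold partial minimum that defines the floor moves as $\off\theta$ changes.

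However, the mechanism you propose for resolving that difficulty is not what makes the paper's argument close, and it is the genuine gap. You suggest ``working with the partial minimizers $\on\theta^*(\off\theta)$, $\off\theta^*(\on\theta)$, bounding cross-terms via Lipschitz dependence of the on-manifold partial minimum on $\off\theta$ and the smallness of $\snorm{\nabla_{\on\theta}\cl}$ once that block has equilibrated.'' But the on-manifold block never equilibrates under GD --- both blocks move every step --- so this leaves an honest quantitative cross-term you never show how to control. The paper instead proves a monotonicity result (Lemma~\ref{lem: coordinate wise optimality}): for any \emph{fixed} $\on\theta$, the sequence $\cl(\on\theta,\iter{\off\theta}{t})$ is nonincreasing in $t$ (and symmetrically for fixed $\off\theta$), because the direction $\nabla_{\off\theta}\cl(\on\theta,\iter{\off\theta}{t})$ is a positive scalar multiple of the direction $\nabla_{\off\theta}\cl(\iter{\theta}{t})$ that GD actually steps along. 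This gives the pointwise comparison
\[
\cl(\iter{\theta}{T})\;\leq\;\cl\bigl(\iter{\on\theta}{T},\iter{\off\theta}{0}\bigr)
\;=\;\Bigl[\cl(\iter{\on\theta}{T},\iter{\off\theta}{0})-\min_{\on\theta}\cl(\on\theta,\iter{\off\theta}{0})\Bigr]
\;+\;\min_{\on\theta}\cl(\on\theta,\iter{\off\theta}{0}),
\]
so the coupled dynamics are dominated by a frozen-off-block auxiliary sequence for which the on-manifold block-PL contraction applies directly and $\min_{\on\theta}\cl(\on\theta,\iter{\off\theta}{0})-\cl(\theta^*)\leq C$. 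No equilibration argument, no cross-term bookkeeping, and no Lipschitz-in-$\off\theta$ estimate on the partial minimum is needed. Without this comparison lemma (or a substitute of equal strength), your two-phase decomposition does not assemble into a proof, because the actual iterate $\iter{\theta}{T}$ is not the endpoint of an on-manifold-only run and you have no handle on $\cl(\iter{\theta}{T})$ in terms of the auxiliary phase-one quantity.

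Two smaller points. First, be careful with the claim that the $\on\theta$-block is ``$\Omega(\off\sigma^2)$-strongly convex, locally near its partial minimizer'' --- the paper's Lemma~\ref{lem:lipschitz/strong convex} gives global two-sided bounds on the compact $\Gamma$, and the PL inequality is derived from global strong convexity, not a local expansion; otherwise you would owe an argument that the iterates eventually enter and stay in a neighborhood of $\theta^*$. Second, in Case~1 ($\delta<C$) the paper does \emph{not} use the directional bound $\off\Delta\cl(t)\preceq-\snorm{\nabla_{\off\theta}\cl}^2\on\sigma^{-2}$ alone as you describe; it uses the full progressive bound $\Delta\cl(t)\preceq-\snorm{\nabla_{\theta}\cl}^2\on\sigma^{-2}$ paired with the global PL constant $c_l\off\sigma^2$ (the worst block), which gives the same $r_2$ but is the correct pairing of a full gradient step with a full PL inequality.
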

Theorem \ref{thm: loss convergence} provides the convergence rates in terms of the loss. Additionally, it states that if the error tolerance $\delta >> C$, we can have fast convergence rate $r_1$ independent of dimensionality ($\nicefrac{\off{\sigma}^2}{\on{\sigma}^2}$). However, for an arbitrarily small $\delta<C$, the convergence rate $r_2$ can be significantly slower controlled by the dimensionality ($\nicefrac{\off{\sigma}}{\on{\sigma}}$) of the data. The threshold $C=\Omega(\nu\log 2)$ is essentially the minimum loss that can be attained by only training $\on{\theta}$ (\ref{proof: thm loss convergence}). As $\delta\to C$, the rate $r_2$ depended on dimensionality takes over. 
\paragraph{Well separated on-manifold distribution}
Suppose there is no overlap, i.e., $\nu=0$, then Theorem \ref{thm: loss convergence} tells us that we can always have a fast convergence rate independent of dimensionality for any arbitrary error-tolerance $\delta$. The on-manifold coefficients are sufficient for perfect classification, corresponding to faster convergence. In this scenario, as long as convergence in $\shortparallel$ direction is attained, the data is perfectly classifiable. Hence, the classifier can achieve robustness just based on the on-manifold direction. (Fig \ref{fig:slow_convergence}b)
\paragraph{Illusion of convergence} When $\nu$ is small, the model can attain fast convergence to a small loss value; however, to perfectly classify the data distribution, convergence in both $\perp$ and $\shortparallel$ direction is still required as $\nu>0$ (Fig \ref{fig:slow_convergence}a). The model in this scenario will face adversarial vulnerability due to the poor convergence in the $\perp$ direction, even though the loss value is small.
\section{Towards second-order optimization}
\label{sec: second order}
Based on our motivation (Section \ref{subsec: Motivation}) and the proofs of the Theorems described in Section \ref{subsec: theorems}, the ill-conditioned nature of clean training is dictated by the usage of uniform small step size in both $\shortparallel$ and $\perp$ direction.
A small step size in the $\shortparallel$ direction is necessary due to large curvature, for careful descent. In the $\perp$ direction, where curvature is small, larger steps could be taken for efficiency. Variable step sizes for each direction will address ill-conditioning,
we can get convergence rates independent of dimensionality $\nicefrac{\off\sigma}{\on\sigma}$ for Theorems \ref{thm: parameter convergence},\ref{thm: loss convergence} in all cases. (Remark \ref{remark: variable step-size})
First-order methods can't automatically choose appropriate step sizes based on the curvature, but for the sake of argument, let us consider a second-order step like Newton's method instead of the GD step in \eqref{eqn: GD logistic}. \begin{align}
    \theta^{(t+1)}=\theta^{(t)}-\left(\nabla^2_{\theta}\cl(\theta^{(t)})\right)^{-1}\nabla_{\theta}\cl(\theta^{(t)}).
    \label{eqn: Newton logistic}
\end{align}   
The inverse hessian in the above equation is analogous to the uniform step size $\alpha$ in GD. However, for Newton's method, the effective step size is controlled by the inverse of the curvature. Large curvature or sharp directions are traversed carefully, and small curvature or flat directions are traversed liberally. The usage of second-order methods automatically induces variable step size, circumventing the ill-conditioning.
\paragraph{Hessian Estimate} In practice, computing the hessian inverse in \eqref{eqn: Newton logistic} is computationally expensive. Hence, instead of using exact Hessians, we can use preconditioning matrices that approximate the Hessian inverse well. 
Our experiments use a relatively fast and inexpensive approximation to the hessian inverse known as the KFAC \citep{martens2015optimizing} preconditioner. KFAC scales efficiently in a distributed parallel setting for larger models. At its core, it is a natural gradient method that approximates the Fisher information as a layerwise block matrix and further approximates those blocks as being the Kronecker product of two much smaller matrices that are easier to compute.
\section{Experiments}
\label{sec: experiments}
Most computer vision datasets can be attributed to having a low-dimensional manifold structure \citep{pope2021intrinsic,osher2017low}. According to our framework, if there is an overlap in the manifold dimensions, the clean training is subjected to ill-conditioning, requiring considerable time to converge. Consequently, if a lack of convergence leads to adversarial vulnerability, the robust accuracy should increase with enough training.  Our discussions in Section \ref{sec: second order}, imply that with a second-order optimization scheme like KFAC, this robustness improvement should be much faster, and we could attain much more robust classifiers by just using clean training. We use the cross-entropy loss in our experiments, which is a multiclass generalization of the logistic loss we used in our theoretical setup.\\\\
We perform clean training on popular computer vision datasets MNIST \citep{lecun2010mnist} and FashionMNIST \citep{xiao2017/online} with a convolution neural network models (Tables \ref{tab:standard architecturel}, \ref{tab:cifar architecturel}
,\cite{lecun2015deep}) under two different optimization schemes first order and second order. We use the ADAM optimizer \citep{kingma2014adam}, which is considered one of the fastest first-order methods. We incorporate KFAC preconditioned matrices for our second-order optimization into the existing ADAM update. We use pytorch implementations \citep{pauloski2020kfac,pauloski2021kaisa} to compute the KFAC preconditioning.\\\\
At each training epoch, we subject the model to adversarial attacks to keep track of the model's robustness. We use $\ell_{\infty}$ Projected gradient descent (PGD)-attacks of strength $\epsilon$ to attack the models \citep{madry2017towards}. The robust accuracy is evaluated on the test data unbeknownst to training for various choices of attack strength $\epsilon$, where $\epsilon=0$ corresponds to the clean test accuracy. The total training is limited to 1000 epochs for illustrative purposes. We implement 10 runs for each model to get the avg and std dev. After $\sim10$ epochs, the clean training loss is $\sim0$ in all scenarios. (For additional details see appendix \ref{sec: exp details}, Code:\footnote{\url{https://anonymous.4open.science/r/Adv_Convergence_code-DF4B/}})

\begin{figure}[ht]
    \centering
    \includegraphics[width=0.45\textwidth]{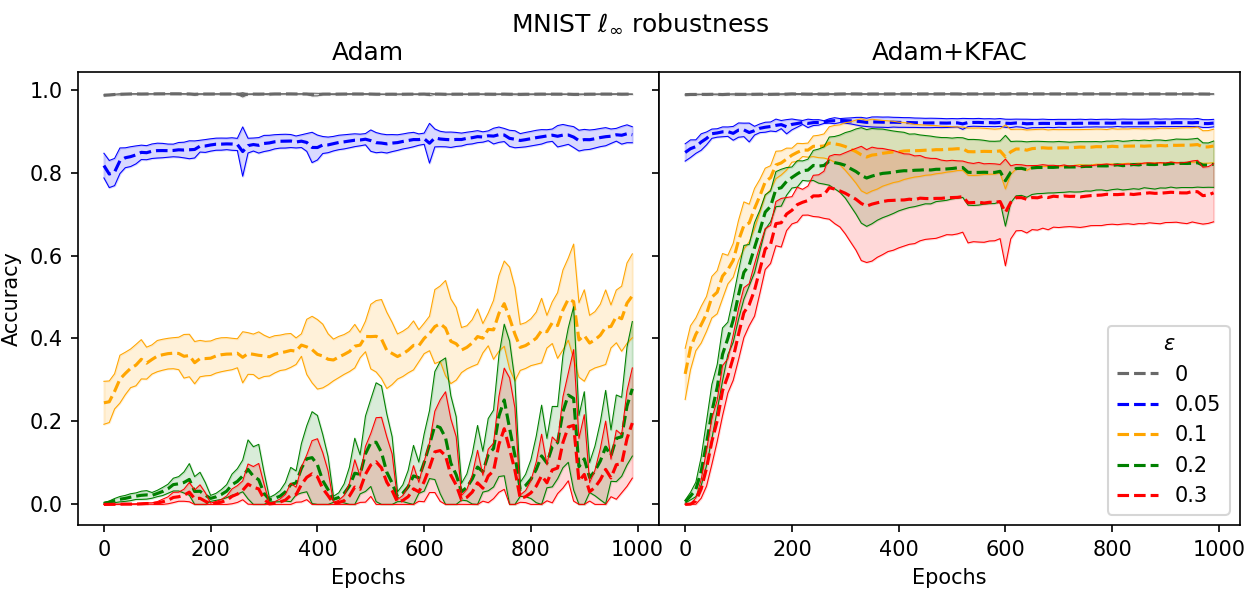}
    \includegraphics[width=0.45\textwidth]{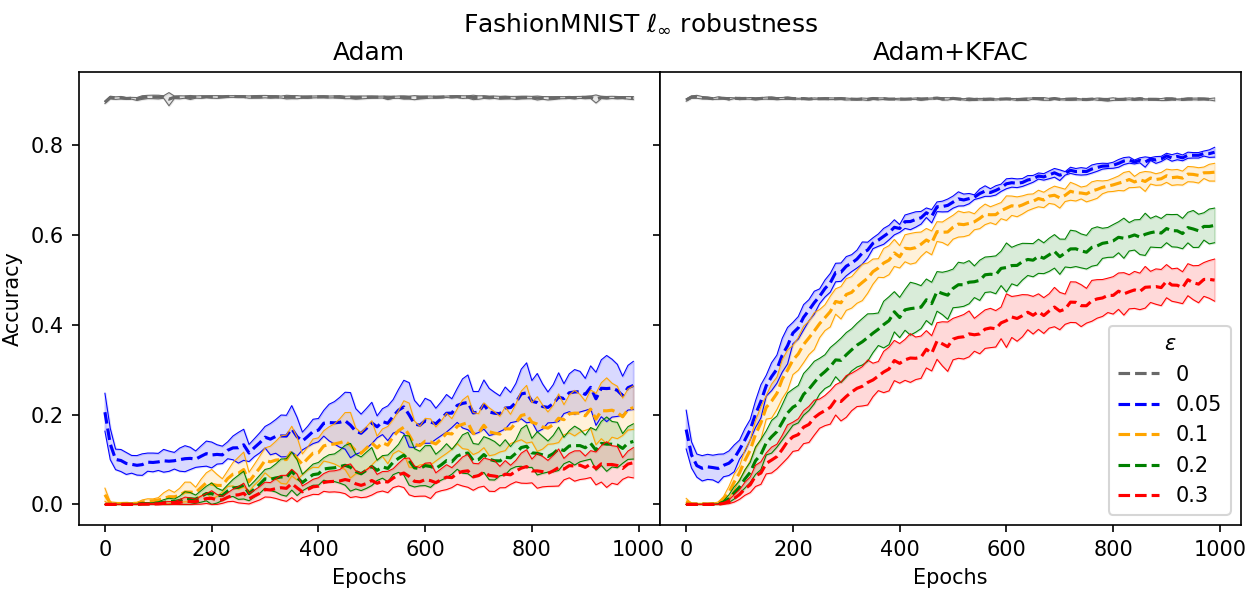}
    \caption{PGD $\ell_\infty$ robustness for clean MNIST (top)/FMNIST (bottom) model. Optimization Schemes (Left): First order ; (Right): Second Order.}
    \label{fig:mnist}
\end{figure}



Figure \ref{fig:mnist} exhibits the results of our experiments described above for the MNIST and FMNIST datasets, respectively. It is evident that irrespective of the order of optimization, the robustness of the clean-trained model does increase with time under all attack strengths $(\epsilon)$ with time, as suggested by our theory. Even though the clean test accuracy is stagnant around $(\epsilon=0)$ is $\sim100,\sim 95$\%  (MNIST, FMNIST resp.) for the majority of the training, the adversarial robustness increases throughout. This validates our theory, suggesting a lack of convergence in $\perp$ direction leading to suboptimal classifiers that aren't large margin. Attaining, $\sim 0$ clean loss value and almost perfect test accuracy yet showcasing improvement in robustness throughout excessive training aligns with our discussions in section \ref{subsec: theorems} \emph{(Illusion of Convergence)} and the motivating illustration Fig \ref{fig: theoretical model} where the classifier is good enough on the data distribution, however, it hasn't attained convergence to optimal classifer. Additionally, the rate of robustness improvement for the second-order optimization is much faster, for the same amount of training epochs.
\begin{figure}[ht]
    \centering
    \begin{subfigure}{.23\textwidth}
  \centering
  \includegraphics[width=\linewidth]{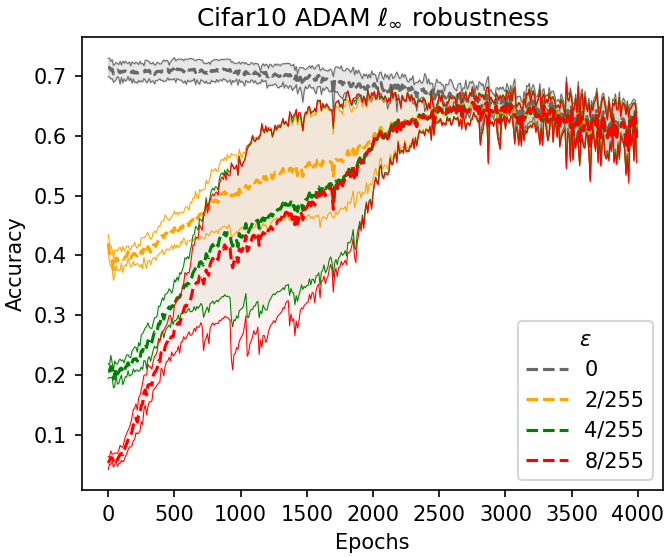}
  \caption{Standard}
  \label{fig:cifar_nobn}
\end{subfigure}%
\begin{subfigure}{.225\textwidth}
  \centering
  \includegraphics[width=\linewidth]{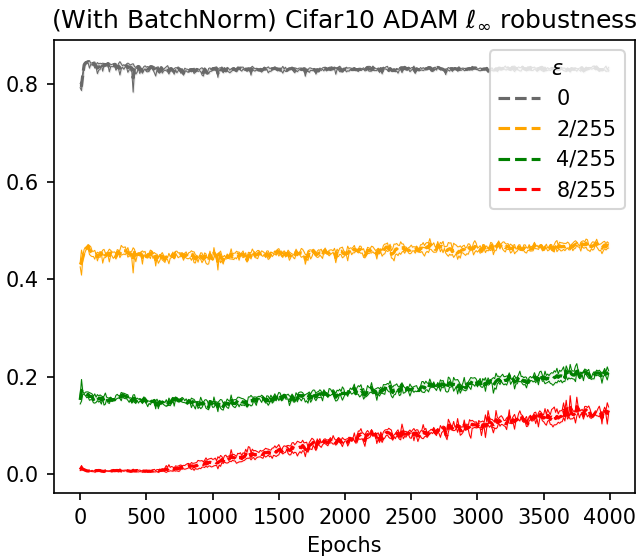}
  \caption{Batch Norm}
  \label{fig:cifar_bn}
\end{subfigure}
\caption{PGD $\ell_\infty$ robustness CIFAR10 clean model.}
    \label{fig:cifar10}
\end{figure}

Additionally, we conduct a similar robustness experiment on the CIFAR10 dataset \citep{krizhevsky2009learning} with ADAM training (Figure \ref{fig:cifar_nobn}). KFAC is designed to scale up in a distributed setting for larger models. Although we couldn’t provide the KFAC version for CIFAR10 due to limited access to a single GPU, we conducted the first-order training for larger epochs ($4000$) to illustrate robustness improvement. We expect a second-order optimization scheme would yield similar results but with much smaller training epochs, as observed in the MNIST and FashionMNIST cases.
\paragraph{Unparalleled clean training performance}
 Attaining $\sim80$ and $\sim 40\%$ robust accuracy for $\epsilon=0.3$ (ADAM+KFAC in Fig \ref{fig:mnist}) in MNIST and FMNIST datasets respectively just by clean training is unprecedented. $\epsilon=0.3$ is a very large attack strength for these datasets, for context \cite{madry2017towards} reports a robust accuracy of only ~3.5\% for MNIST dataset undergoing clean training with the same attack strength. Similarly, for CIFAR10 Fig \ref{fig:cifar_nobn}, we attain $\sim 60\%$ robust accuracy for $\epsilon=8/255$ just using clean training. Traditional literature reports $0\%$ robust accuracy for clean-trained model and $47.04\%$ accuracy for the PGD-based adversarially trained model \citep{zhang2019theoretically}.
 \begin{figure}[!ht]
    \centering
    \includegraphics[width=0.49\textwidth]{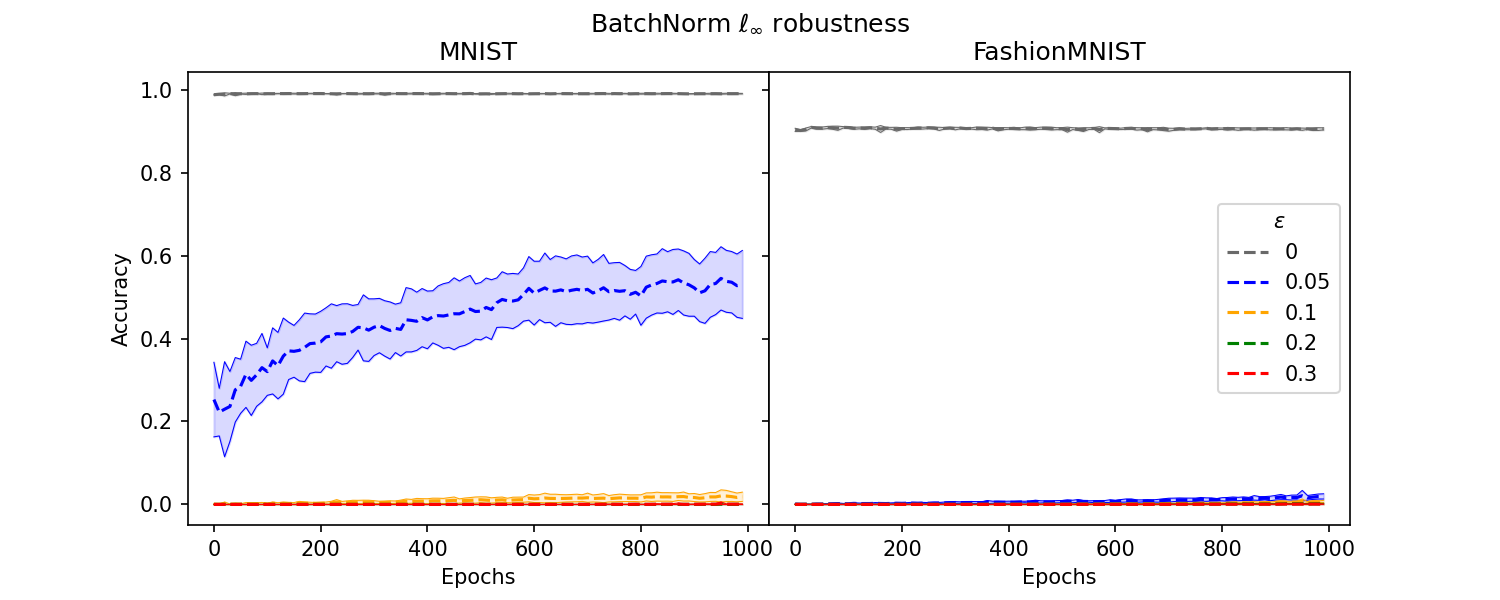}
    \caption{PGD -$\ell_\infty$ robustness for clean MNIST (left), FMNIST (right) models with batch-norm layers.}
    \label{fig:batchnorm}
\end{figure}
 \subsection{Vulnerability of Batch-Normalization}\label{sec:bn}
 Figures \ref{fig:batchnorm} and \ref{fig:cifar_bn} plot identical experiments but with the model architectures, including batch-normalization (Table \ref{tab:bn architecturel}, \ref{tab:cifar bn architecturel}). In this scenario, there is no robustness improvement with additional training, suggesting that batch-normalized models are vulnerable even at convergence. This aligns with the fact that contrary to traditional ReLU networks, which have an implicit bias towards maximum margin classifier, batch normalized networks have an implicit bias toward uniform margin \citep{cao2023implicit}. A uniform margin classifier doesn't enjoy the robustness properties of a maximum margin classifier due to its geometry. This raises robustness concerns for modern ML pipelines where batch normalization is a default.
\section{Discussion and Conclusion}
\label{sec: discussion}
We extend the notion of on and off-manifold dimensions to high and low-variance features. We explore a framework where inseparability in the on-manifold direction between classes causes adversarial vulnerability due to poor off-manifold learning from ill-conditioning. We present theoretical results supporting this hypothesis for a binary classification problem on a toy data distribution motavitaded by this concept. The theoretical analysis is done for the logistic regression case and 2-layer linear network for mathematical tractability. However, the idea of ill-conditioning facilitated by using first-order optimization for a model trained on a low-dimensional data manifold, resulting in adversarial vulnerability when data is inseparable in manifold direction, seems much more general. We verify this contrapositively by implementing experiments on MNIST, FMNIST, and CIFAR10 datasets with CNNs under a cross-entropy loss. Furthermore, we advocate using second-order methods that inherently circumvent ill-conditioning and lead us to a much more robust classifier just by using clean training. 
\paragraph{Adversarial Training}
In practice, adversarial training (AT)  is used to induce robustness in models \citep{goodfellow2014explaining,madry2017towards}. Clean training minimizes the loss for original data distribution, whereas AT minimizes the loss for the distribution of perturbed data or adversarial examples. AT requires a parameter $\epsilon$ corresponding to the attack strength or search radius of adversarial examples from the original data distribution. Given,  $\epsilon$ is less than the \emph{margin} between the distinct classes, AT is doing clean training on the adversarial examples distribution. This forms an $\epsilon$-ball cover around the original data distribution. Even if the original data distribution is low dimensional, the adversarial example distribution will be the exact ambient dimension. In our framework, this translates to the fact that the off-manifold or low variance features will also have an $ \epsilon $ -ball cover, effectively increasing its variance.
Hence, $\nicefrac{\off{\sigma}}{\on{\sigma}}$ increases from the original data distribution to the adversarial counterpart, thereby improving the convergence (Theorems \ref{thm: parameter convergence}, \ref{thm: loss convergence}). This explains why adversarial training can find a robust classifier while clean training cannot, even though a robust classifier is also an optimal solution to the clean loss.


\newpage     
\bibliographystyle{apalike}
\bibliography{reference}

\appendix
\onecolumn
\section{Proofs}
In all our proofs, we assume that $Rank(\bld{A})=D$, as when $Rank(\bld{A})<D$, we can just work with a block of $\bld{A}$ that is full rank. The gradient of $w$ corresponding to the co-kernel of $\bld{A}$ will always be zero, and it doesn't affect any of the gradient descent steps pertaining to our analysis.
\subsection{Reparametrization}
\begin{lemma}[Reparametrization] For the $t^{th}$ iterate of w-step \eqref{eqn: AGD 2 layer w} and $\bld{A}$-step of AGD  \eqref{eqn: AGD 2 layer A} there exist reparametrizations $\iter{\wt}{t}$ and $\iter{\At}{t}$ such that
$\iter{\theta}{2t+1}=(\on{\iter{\theta}{2t+1}},\off{\iter{\theta}{2t+1}})=(\on{\iter{\at}{t}},\off{\iter{\at}{t}})$ and $\iter{\theta}{2t}=(\on{\iter{\theta}{2t}},\off{\iter{\theta}{2t}})=(\on{\iter{\wt}{t}},\off{\iter{\wt}{t}})$. Where $\wt=(\on{\wt},\off{\wt},\flt{\wt}), \vect{\At}=(\on{\at},\off{\at},\flt{\at})$ and $\nabla\cl_{\flt{\wt}}=\bld{0}_{m-D},\nabla\cl_{\flt{\at}}=\bld{0}_{mD-D}$.
\label{lem: reparametrize}
\end{lemma}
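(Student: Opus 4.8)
The plan is to produce, for each iteration $t$, two orthogonal-type changes of variables — one adapted to the current first-layer matrix $\iter{\bld{A}}{t}$ (for the $w$-step) and one adapted to the updated second-layer vector $\iter{w}{t+1}$ (for the $\bld{A}$-step) — under which $w$ (resp.\ $\vect{\bld{A}}$) splits into a $D$-dimensional ``effective'' block that equals the identifiable parameter $\theta=\bld{A}^T w$, hence inherits its on/off-manifold split into $d+g=D$ coordinates, plus a complementary block along which $\cl$ is constant. The payoff is that each AGD sub-step then reduces to a gradient-descent step on $\theta=(\on{\theta},\off{\theta})$ — the same dynamics as logistic regression — together with a trivial, gradient-free update on the complementary ``flat'' block, which is what lets the later theorems treat GD, the $w$-step and the $\bld{A}$-step uniformly. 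Throughout I would use the chain-rule identities $\nabla_w\cl(\gamma)=\bld{A}\,\nabla_\theta\cl(\theta)$ and $\nabla_{\bld{A}}\cl(\gamma)=w\,(\nabla_\theta\cl(\theta))^T$ (both valid because $f(x,\gamma)=w^T\bld{A}x=\theta^T x$ factors through $\theta$), and the standing assumption $\mathrm{Rank}(\bld{A})=D$ from the start of this appendix, which forces the flat blocks to have dimensions exactly $m-D$ and $mD-D$.

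\emph{$w$-step.} Fix $t$ and write $\bld{A}=\iter{\bld{A}}{t}$, $w=\iter{w}{t}$; by the orthogonalization step $\bld{A}^T\bld{A}=\bld{I}_D$. Choose $\bld{B}\in\mathbbm{R}^{m\times(m-D)}$ with orthonormal columns spanning $\ker(\bld{A}^T)$, so $\bld{Q}:=[\bld{A}\mid\bld{B}]$ is orthogonal. Set $\iter{\wt}{t}:=\bld{Q}^T w$: its first $D$ coordinates are $\bld{A}^T w=\iter{\theta}{2t}$, which I split (first $d$ columns of $\bld{A}$ carry $\on{x}$, last $g$ carry $\off{x}$) as $(\on{\iter{\wt}{t}},\off{\iter{\wt}{t}}):=(\on{\iter{\theta}{2t}},\off{\iter{\theta}{2t}})$, and its last $m-D$ coordinates are $\flt{\iter{\wt}{t}}:=\bld{B}^T w$. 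Since $\cl(\gamma)=\E\,\ell(y\,\theta^T x)$ with $\theta=\bld{A}^T w$ does not see $\bld{B}^T w$, the loss is constant along $\flt{\wt}$, hence $\nabla_{\flt{\wt}}\cl=\bld{0}_{m-D}$ — exactly the vanishing-co-kernel-gradient fact noted at the start of the appendix. As $\bld{Q}$ is orthogonal, the $w$-step is gradient descent on $\iter{\wt}{t}$; multiplying it on the left by $\bld{A}^T$ gives $\iter{\theta}{2t+1}=\iter{\theta}{2t}-\alpha_1\nabla_\theta\cl(\iter{\theta}{2t})$, and in particular $\iter{\theta}{2t}=(\on{\iter{\wt}{t}},\off{\iter{\wt}{t}})$.

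\emph{$\bld{A}$-step.} Now regard $\vect{\bld{A}}$ as the variable and $\iter{w}{t+1}$ as fixed, and let $\bld{M}\in\mathbbm{R}^{D\times mD}$ be the matrix of the linear map $\vect{\bld{A}}\mapsto\bld{A}^T\iter{w}{t+1}=\theta$. A direct computation gives $\bld{M}\bld{M}^T=\snorm{\iter{w}{t+1}}^2\,\bld{I}_D$, so (using $\mathrm{Rank}(\bld{A})=D$ and $\iter{w}{t+1}\neq\bld{0}$) $\bld{M}$ has full row rank $D$ and $\ker(\bld{M})$ has dimension $mD-D$. I would define $\iter{\At}{t}$ so that its first $D$ coordinates are $\bld{M}\,\vect{\iter{\bld{A}}{t}}={\iter{\bld{A}}{t}}^T\iter{w}{t+1}=\iter{\theta}{2t+1}$, split as $(\on{\iter{\at}{t}},\off{\iter{\at}{t}})$, and its last $mD-D$ coordinates parametrize $\ker(\bld{M})$ (via an orthonormal basis), called $\flt{\iter{\at}{t}}$. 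Since $\bld{A}^T\iter{w}{t+1}=\theta$, hence $\cl$, is unchanged along $\ker(\bld{M})$, we get $\nabla_{\flt{\at}}\cl=\bld{0}_{mD-D}$, while $\iter{\theta}{2t+1}=(\on{\iter{\at}{t}},\off{\iter{\at}{t}})$ holds by construction. For the dynamics, $\nabla_{\bld{A}}\cl=\iter{w}{t+1}(\nabla_\theta\cl(\iter{\theta}{2t+1}))^T$ has vectorization lying in the row space of $\bld{M}$, so the $\bld{A}$-step leaves $\flt{\at}$ fixed and induces $\iter{\theta}{2t+2}=\iter{\theta}{2t+1}-\alpha_2\snorm{\iter{w}{t+1}}^2\,\nabla_\theta\cl(\iter{\theta}{2t+1})$ — again a $\theta$-gradient step, now with effective step size $\alpha_2\snorm{\iter{w}{t+1}}^2$.

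\emph{Main obstacle.} Most of the work is bookkeeping: the changes of variables are iteration-dependent ($\bld{B}$ depends on $\iter{\bld{A}}{t}$, $\bld{M}$ on $\iter{w}{t+1}$), so I would need to check the identifications hold at every $t$ without a single global reparametrization and that this is compatible with the orthogonalize-and-recalibrate step — which it is, precisely because that step preserves $\theta$ — while keeping the even/odd indexing straight since $\theta$ moves twice per AGD iteration. The one genuine subtlety is that $\vect{\bld{A}}\mapsto\bld{A}^T w$ is orthogonal only after rescaling the $\theta$-block by $\snorm{\iter{w}{t+1}}$, so the $\bld{A}$-step's induced $\theta$-update carries the factor $\snorm{\iter{w}{t+1}}^2$; this has to be shown harmless, which follows because $\cw$ is compact (bounding $\snorm{\iter{w}{t+1}}$ above) and the factor is absorbed into the convention $\alpha_2\preceq\on{\sigma}^{-2}$ used throughout Section \ref{subsec: theorems}. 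I expect certifying that the $\bld{A}$-step is ``gradient descent on $\theta$'' in the same sense as the $w$-step — i.e.\ controlling that effective step size — to be the main point requiring care; the rest is linear algebra on orthogonal complements.
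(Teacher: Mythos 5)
Your proof is correct, and it establishes the lemma by the same underlying idea as the paper: since the loss factors through $\theta=\bld{A}^T w$, one can find an invertible iteration-dependent linear change of variables that places the $D$ $\theta$-coordinates (and hence the $\shortparallel/\perp$ split) in the first block and leaves a complementary ``flat'' block on which $\cl$ is constant, so that the gradient there vanishes. For the $w$-step your construction and the paper's essentially coincide: both extend $\bld{A}$ to an invertible $m\times m$ matrix $\bld{B}=[\bld{A}\mid\bld{C}]$; you additionally invoke the orthogonalization step $\bld{A}^T\bld{A}=\bld{I}_D$ to choose $\bld{C}$ with orthonormal columns spanning $\ker(\bld{A}^T)$, which makes $\bld{Q}$ orthogonal and slightly tightens the bookkeeping, but the decomposition and the vanishing of $\nabla_{\flt{\wt}}\cl$ are the same.

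For the $\bld{A}$-step your route is genuinely different and cleaner. The paper introduces the $m\times m$ matrix $\bU$ with rows built by Gram–Schmidt from $u_1=w/\snorm{w}^2$ and works with $\bld{A}=\bU^T\At$ so that $\theta$ appears as the first row of $\At$. You instead work directly in vectorized coordinates with the explicit linear map $\bld{M}=\bld{I}_D\otimes w^T:\vect\bld{A}\mapsto\bld{A}^T w$, observe $\bld{M}\bld{M}^T=\snorm{w}^2\bld{I}_D$ so $\bld{M}$ has full row rank, and split $\mathbbm{R}^{mD}=\mathrm{row}(\bld{M})\oplus\ker(\bld{M})$ to place $\theta$ in the first $D$ coordinates and the flat block along $\ker(\bld{M})$. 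Because $\vect(w(\nabla_\theta\cl)^T)=\bld{M}^T\nabla_\theta\cl$ lies in $\mathrm{row}(\bld{M})$, the flat coordinates are indeed fixed and $\nabla_{\flt{\at}}\cl=\bld{0}_{mD-D}$. This Kronecker-structure argument avoids the explicit $\bU$ construction and the later manipulation of $\bld{K}^T\bld{K}$. Incidentally, your direct computation of the induced update, $\iter{\theta}{2t+2}=\iter{\theta}{2t+1}-\alpha_2\snorm{\iter{w}{t+1}}^2\nabla_\theta\cl$, gives $\snorm{w}^2$ where the paper's progressive-bound proof writes $\snorm{w}^4$; you can verify directly that $(\nabla_{\bld{A}}\cl)^T w=\snorm{w}^2\nabla_\theta\cl$, so the exponent $2$ is the right one, and the discrepancy is only a constant absorbed into the step size on the compact space $\cw$, not something that affects any conclusion. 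Your observations about iteration-dependence of the change of variables, the even/odd indexing, and compatibility with the orthogonalize-and-recalibrate step (which preserves $\theta$) are all accurate and address exactly the points that require care.
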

\begin{proof}
Recall that, for the two-layer linear network, $\gamma=(\vect{\bld{A}},w)$ and $\theta=\bld{A}^Tw$, with $\nabla_\theta\cl(\gamma)=-\E\limits_{\sim x,y} yx \sigma(-y\cdot z)$ (\eqref{eqn: grad}). Our reparametrization/change of basis will help us separate the components of the first and second layer into $\shortparallel,\perp,\vdash$ directions corresponding to the identifiable parameter $\theta=(\on{\theta},\off{\theta})$. $\vdash$ components correspond to redundant directions that don't undergo any change in the AGD step. 
\subsubsection{w-step}
\label{subsec: w-step}
In this step of AGD, the second layer is fixed to $\iter{\bld{A}}{t}$. Given $\bld{A}=\iter{\bld{A}}{t}$, the gradient w.r.t. the first layer parameter $w$ is:
\begin{equation}
    \nabla_w\cl(\iter{\theta}{2t})=-\bE\sigma(-y\cdot z)y\bld{A}x
\end{equation}
$\bld{A}$ is a rectangular matrix ($m\times D; m\geq D$). $Rank(\bld{A})\leq D$. Hence there exists a change of basis $\bld{B}\in \mathbbm{R}^{m\times m}$ such that $\bld{A}=\bld{B}[\bld{e}_1\dots \bld{e}_D]$ where $\bld{e}_i$ is the standard basis in $\mathbbm{R}^m$ with $i^{th}$ coordinate being 1, and rest 0. This choice of standard basis for the $Im(\bld{A})$ loses no generality as we can always have $\Tilde{\bld{B}}=\bld{B}\bld{P}$ which permutes the basis in such a way so that the transformation adheres to the choice of a subset of the standard basis. In general $\bld{B}=\begin{bmatrix}
   \bld{A} & \bld{C} 
\end{bmatrix}$ has to take this kind of form, for any choice of $\bld{C}\in \mathbbm{R}^{m\times m-D}$. Hence, the gradient w.r.t.can be expressed as:
\begin{equation}
    \nabla_w\cL(\iter{\theta}{2t})=-\bE\sigma(-y\cdot z)y\bld{B}\begin{pmatrix}
        x\\
        \bld{0}_{m-D}
    \end{pmatrix}
\end{equation}
We can choose an appropriate full rank $\bld{C}$ outside the span of $\bld{B}$ to get an inverse transformation $\bld{B}^{-1}$. Note in the case that $Rank(\bld{A})<D$ we can just work with block inverse of the full rank part in $\bld{B}$; as the gradient of $w$ corresponding to the co-kernel of $\bld{A}$ will always be zero, so it isn't of interest.\\
Consider the reparametrization $\iter{\wt}{t}=\bld{B}^{T}\iter{w}{t}$, with gradients:
\begin{equation}
 \nabla_{\Tilde{w}}\cL(\iter{\theta}{2t})=\bld{B}^{-1}\nabla_w\cL=-\bE\sigma(-y\cdot z)y\bld{A}x=-\bE\sigma(-y\cdot z)y\begin{pmatrix}
        x\\
        \bld{0}_{m-D}
    \end{pmatrix}  
    \label{eqn: reparemtrize w}
\end{equation}
Note that, $\iter{\theta}{2t}={\iter{\bld{A}}{t}}^T\iter{w}{t}={(\bld{B}[\bld{e}_1\dots \bld{e}_D])}^T\iter{w}{t}=[\bld{e}_1\dots \bld{e}_D]^T\iter{\wt}{t}$. Therefore, it follows that  there exists $\on{\wt},\off{\wt}$ components corresponding to $\on{\theta},\off{\theta}$ at every $t$. Note that the remaining components $\flt{\wt}$ are redundant and have zero gradients. 
\begin{align}
 \nabla_{\Tilde{w}}\cL(\theta)&=-\bE\sigma(-y\cdot z)y\begin{pmatrix}
        x\\
        \bld{0}_{m-D}
    \end{pmatrix}   \\
    \nabla_{\on{\wt}}\cL(\theta)&=-\bE\sigma(-y\cdot z)y\on{x}\\
    \nabla_{\off{\wt}}\cL(\theta)&=-\bE\sigma(-y\cdot z)y\off{x}\\
    \nabla_{\flt{\wt}}\cL(\theta)&=\bld{0}_{m-D}
\end{align}
\subsubsection{A-step}
\label{subsec: A-step}
In this step, the first layer is fixed to $\iter{w}{t+1}$. Given $w=\iter{w}{t+1}$, the gradient w.r.t. the second layer parameters $\bld{A}$ is :
\begin{align}
    \nabla_{\bld{A}}\cL&=-\bE\sigma(-y\cdot z)ywx^T\\
    \nabla_{\vect\bld{A}}\cL&=-\bE\sigma(-y\cdot z)yw\otimes x
\end{align}
Consider the reparametrization $\At$ such that $\iter{\bld{A}}{t}=\bU^T\iter{\At}{t}=\begin{pmatrix}
    u_1\\
    \vdots\\
    u_m
\end{pmatrix}^T\At$.  The columns are chosen using Gram-Schmidt such that, $u_1=\nicefrac{w}{\snorm{w}^{2}}$ and $\Inner{u_j}{w}=0,\snorm{u_j}=1$ for all $j\neq 1$. The gradient in the re-parametrized space is then:
\begin{align}
    \nabla_{\At}\cL&=-\bE\sigma(-y\cdot z)y\bU wx^T=-\bE\sigma(-y\cdot z)y\bld{e}_1x^T \label{eqn: reparemtrize A}\\
    \nabla_{\vect\At}\cL&=-\bE\sigma(-y\cdot z)y\bld{e}_1\otimes x
\end{align}
Note that, \begin{equation*}
    \iter{\theta}{2t+1}=\bld{A}^{(t)^T}\iter{w}{t+1}={({\bU}^T\iter{\At}{t})}^Tw^{t+1}=\At^{(t)^T}\bU w^{t+1}=\At^{(t)^T}\bld{e}_1
\end{equation*}. Therefore, it follows that there exists $\on{a},\off{a}$ components for $\vect\At$ corresponding to $\on{\theta},\off{\theta}$ at every $t$. Hence, $\vect\At=(\on{a},\off{a},\flt{a})$ with: 
\begin{align}
    \nabla_{\on{a}}\cL(\theta)&=-\bE\sigma(-y\cdot z)y\on{x}\\
    \nabla_{\off{a}}\cL(\theta)&=-\bE\sigma(-y\cdot z)y\off{x}\\
    \nabla_{\flt{a}}\cL(\theta)&=\bld{0}_{mD-D}
\end{align}
\end{proof}

\subsection{Hessian}
\begin{lemma}[Lipschitz smoothness and Strong Convexity] 
\label{lem:lipschitz/strong convex}For any value of $\gamma$ in both logistic and two-layer-linear setting. The loss Hessians w.r.t. identifiable parameter $\theta$ can be bounded as follows:
\begin{align}
     c_l\cdot\begin{pmatrix}
    \on\sigma^2\bld{I}_d &\bld{0}\\
    \bld{0} & \off\sigma^2\bld{I}_g
\end{pmatrix}&<\nabla^2_{\theta}\cl(\gamma)<c_u\cdot\begin{pmatrix}
    \on\sigma^2\bld{I}_d &\bld{0}\\
    \bld{0} & \off\sigma^2\bld{I}_g\end{pmatrix}\label{eqn: hessian bounds}\\
c_l\off\sigma^2\bld{I}_g&<\nabla^2_{\off{\theta}}\cl(\gamma)<c_u\off{\sigma^2}\bld{I}_g\label{eqn: hessian bound off-manifold}\\
c_l\on\sigma^2\bld{I}_d&<\nabla^2_{\on{\theta}}\cl(\gamma)<c_u\on{\sigma^2}\bld{I}_d\label{eqn: hessian bound on-manifold}
\end{align}
Here $c_l,c_u>0$ are constants. Consequently, $\cl(\gamma)$ is convex in $\theta,\on{\theta},\off{\theta}$.
\end{lemma}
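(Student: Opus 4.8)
The plan is to bound the Hessian $\nabla^2_\theta \cl(\gamma) = \E_{x,y} xx^T \sigma(z)\sigma(-z)$ both above and below, where $z = f(x,\gamma)$. First I would observe that the scalar factor $\sigma(z)\sigma(-z)$ is always positive, and since $\gamma$ ranges over the \emph{compact} parameter space $\Gamma$ and $x$ over a \emph{bounded} support (the hypercubes $\mathcal{\on I}^{(y)}\times \mathcal{\off I}^{(y)}$ are bounded), the score $z = \theta^T x$ is uniformly bounded, say $|z| \le M$ for some finite $M$. Hence there exist constants $0 < c_l \le c_u$ with $c_l \le \sigma(z)\sigma(-z) \le c_u = 1/4$ for all relevant $(x,\gamma)$. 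For the two-layer case one uses $\theta = \bld A^T w$ with $(\vect{\bld A}, w)$ in the compact product space $\ca \times \cw$, so the same uniform bound on $z$ holds; the identifiability discussion in Section~\ref{subsec: 2layer setup} lets us phrase everything in terms of $\theta$.

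Next I would sandwich the matrix $\E_{x,y} xx^T \sigma(z)\sigma(-z)$ between $c_l \E_{x,y} xx^T$ and $c_u \E_{x,y} xx^T$ in the Loewner order (this is valid because $xx^T \succeq 0$ and we are scaling by a bounded positive scalar pointwise, then taking expectations). So it remains to compute $\E_{x,y} xx^T$. By the block structure $x = (\on x, \off x)$ this is a block matrix with diagonal blocks $\E \on x \on x^T$ and $\E \off x \off x^T$ and off-diagonal block $\E \on x \off x^T$. Conditioning on $y$, the on- and off-manifold coordinates are independent, so $\E[\on x \off x^T \mid y] = \on\mu^{(y)} (\off\mu^{(y)})^T$; after averaging over $y$ this cross term is some fixed finite matrix, and the diagonal blocks are $\on\sigma^2 \bld I_d + \E[\on\mu^{(y)}(\on\mu^{(y)})^T]$ and $\off\sigma^2 \bld I_g + \E[\off\mu^{(y)}(\off\mu^{(y)})^T]$ — i.e. each is the covariance ($\on\sigma^2 \bld I_d$ or $\off\sigma^2 \bld I_g$) plus a bounded rank-one-ish mean-outer-product term. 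To get the clean form in \eqref{eqn: hessian bounds}, I would absorb the mean contributions and the cross block into the constants $c_l, c_u$: since all these extra terms are bounded (using $\snorm{\off\mu^{(y)}} < \infty$ and $l$ fixed) and $\on\sigma, \off\sigma$ are fixed positive numbers, one can choose $c_l$ small enough and $c_u$ large enough that the stated two-sided bound against $\diag(\on\sigma^2 \bld I_d, \off\sigma^2 \bld I_g)$ holds. The restricted Hessians $\nabla^2_{\on\theta}\cl$ and $\nabla^2_{\off\theta}\cl$ are just the corresponding diagonal blocks, giving \eqref{eqn: hessian bound off-manifold} and \eqref{eqn: hessian bound on-manifold}, and positive-definiteness of the Hessian yields convexity in $\theta$, $\on\theta$, $\off\theta$ separately.

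The main obstacle I anticipate is not any single estimate but making the constants genuinely \emph{uniform} and consistent across all three inequalities and both model classes simultaneously. In particular: (i) the lower bound $c_l > 0$ requires that $z$ stay bounded \emph{away from} $\pm\infty$, which is exactly where compactness of $\Gamma$ (and of $\ca \times \cw$) is doing essential work — without it $\sigma(z)\sigma(-z)$ could vanish and the strong-convexity claim would fail; (ii) one must verify that the off-diagonal (cross) block and the mean-outer-product terms can be dominated by the diagonal $\on\sigma^2/\off\sigma^2$ scaling with the \emph{same} constants that work for the restricted Hessians — this is a routine but slightly fiddly Schur-complement / block-matrix norm bookkeeping step; and (iii) for the two-layer network one should double-check that expressing the Hessian ``w.r.t.\ $\theta$'' is legitimate given the over-parametrization, which is handled by the identifiability remark and the orthogonalization assumption. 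None of these is deep, but they are the places where a sloppy argument would break, so I would be careful to state the bounded-support and compactness inputs explicitly before doing the block computation.
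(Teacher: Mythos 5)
Your proposal follows the same strategy as the paper's proof: (i) use compactness of $\Gamma$ and boundedness of the data support to pin $\sigma(z)\sigma(-z)$ into an interval $[c_l, c_u]$ bounded away from $0$, giving $c_l\,\E xx^T \prec \nabla^2_\theta\cl \prec c_u\,\E xx^T$ in the Loewner order; (ii) reduce $\E xx^T$ to the block-diagonal variance matrix $\diag(\on\sigma^2\bI_d,\off\sigma^2\bI_g)$ plus a bounded PSD mean-contribution, and absorb the latter into the constants. The one place you differ is in step (ii): the paper writes $\E xx^T = \Sigma_x + (\E x)(\E x)^T$, notes the second matrix is rank one with top eigenvalue $\snorm\mu^2$, and then rewrites $\snorm\mu^2\bI_D$ as $\diag(\on c^2\on\sigma^2\bI_d,\off c^2\off\sigma^2\bI_g)$ by setting $\snorm\mu = \on c\,\on\sigma = \off c\,\off\sigma$, yielding $c_u = \max\{1+\on c^2,\,1+\off c^2\}$ explicitly. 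You instead split $\E xx^T$ into $\on x/\off x$ blocks (diagonal: within-class covariance plus class-mean outer product; off-diagonal: the cross term $\E_y[\on\mu^{(y)}(\off\mu^{(y)})^T]$) and assert the whole PSD remainder can be dominated by a multiple of $\diag(\on\sigma^2\bI_d,\off\sigma^2\bI_g)$. That assertion is true, and you correctly flag it as the fiddly step — but you leave it unexecuted, whereas the paper's rank-one eigenvalue bound and the $\on c,\off c$ normalization is exactly the concrete mechanism that carries it out and makes $c_u$ explicit. (You also get a small bonus for being more careful than the paper: the identity $\E xx^T = \Sigma_x + (\E x)(\E x)^T$ with $\Sigma_x = \diag(\on\sigma^2\bI_d,\off\sigma^2\bI_g)$ as written in the paper conflates the within-class covariance with the full covariance of the mixture; your block-by-block accounting via $\E_y[\mathrm{Cov}(x|y)] + \E_y[\mu^{(y)}(\mu^{(y)})^T]$ is the clean version of that step.) So: same route, with one step of yours outlined rather than carried through, and no genuine gap in the logic.
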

\begin{proof}
    The Hessian of the loss w.r.t. the identifiable parameter $\theta$ is 
$\nabla^2_{\theta}\cl(\gamma)=\E\limits_{\sim x,y}xx^T\sigma(z)\sigma(-z)$ [\eqref{eqn: hessian}]. As our analysis is restricted to the compact space, $\sigma(z)$ is strictly between 0 and 1, as $\sigma(z)=1/0$ if and only if $\theta=\pm \infty$. Let $c_l>0$ be the lower bound of the entropy $\sigma(z)\sigma(-z)$. Then, using positive definiteness and integrating both sides (Here $>,<$ between matrices corresponds to the difference being positive/negative definite):
\begin{align*}
    \E\limits_{\sim x,y}xx^Tc_l&<\nabla^2_{\theta}\cl(\gamma)<\E\limits_{\sim x,y}xx^T\nonumber\\
    (\Sigma_x+\E\limits_{\sim x,y}x{\E\limits_{\sim x,y}x}^T)\cdot c_l&<\nabla^2_{\theta}\cl(\gamma)<\Sigma_x+\E\limits_{\sim x,y}x{\E\limits_{\sim x,y}x}^T
\end{align*}
Where $\Sigma_x=\begin{pmatrix}
    \on\sigma^2\bld{I}_d &\bld{0}\\
    \bld{0} & \off\sigma^2\bld{I}_g
\end{pmatrix}$ is the covariance matrix. The vector outer-product $\E x \E x^T$ has eigenvalues $\snorm{\mu}^2=\norm{\E x}^2,0,0\dots, 0$. Hence, $\bld{0}_D\leq \E x\E x^T\leq \snorm{\mu}^2\bld{I}_D$. This implies that:
\begin{align}
    \Sigma_x\cdot c_l &<\nabla^2_{\theta}\cl(\gamma)<\Sigma_x+\snorm{\mu}^2\bld{I}_D\nonumber\\
    \intertext{For $0<\snorm{\mu},\on{\sigma},\off{\sigma}<\infty$. We have $\snorm{\mu}=\on{c}\on\sigma=\off{c}\off\sigma$}
    c_l\cdot\begin{pmatrix}
    \on\sigma^2\bld{I}_d &\bld{0}\\
    \bld{0} & \off\sigma^2\bld{I}_g
\end{pmatrix}&<\nabla^2_{\theta}\cl(\gamma)<\begin{pmatrix}
    \on\sigma^2\bld{I}_d &\bld{0}\\
    \bld{0} &\off\sigma^2\bld{I}_g
\end{pmatrix}+\begin{pmatrix}
    \on{c}^2\on\sigma^2\bld{I}_d &\bld{0}\\
    \bld{0} &\off{c}^2\off\sigma^2\bld{I}_g
\end{pmatrix}\nonumber\\
\intertext{Then there exists constants $c_u=\max\{1+\on{c}^2,1+\off{c}^2\}$ such that}
c_l\cdot\begin{pmatrix}
    \on\sigma^2\bld{I}_d &\bld{0}\\
    \bld{0} & \off\sigma^2\bld{I}_g
\end{pmatrix}&<\nabla^2_{\theta}\cl(\gamma)<c_u\cdot\begin{pmatrix}
    \on\sigma^2\bld{I}_d &\bld{0}\\
    \bld{0} & \off\sigma^2\bld{I}_g
\end{pmatrix}\nonumber
\end{align}
\begin{equation*}
    c_l\off\sigma^2\bld{I}_g<\nabla^2_{\off{\theta}}\cl(\gamma)<c_u\off{\sigma^2}\bld{I}_g
\end{equation*}
\begin{equation*}
    c_l\on\sigma^2\bld{I}_d<\nabla^2_{\on{\theta}}\cl(\gamma)<c_u\on{\sigma^2}\bld{I}_d
\end{equation*}
As the above Hessians are positive definite, convexity follows.
\end{proof}

\begin{proof}[Proof of Theorem \ref{thm: Progressive Bound}]
\subsection{Gradient Decomposition}

From \eqref{eqn: grad} $\nabla_\theta\cl(\gamma)=-\E\limits_{\sim x,y} yx \sigma(-y\cdot z)$. We will tackle the $\shortparallel/\perp$ components of the gradient separately.
\subsubsection{On Manifold}
Consider $\nabla_{\on{\theta}}\cl(\gamma)=-\E\limits_{\sim x,y} y\on{x} \sigma(-y\cdot z)$. 
 $\nabla_{\on{\theta}}\cl$ can be decomposed into $\nabla_{\on{\theta}}\cl^{\neg\emptyset},\nabla_{\on{\theta}}\cl^{\emptyset}$, the gradients arising from overlapping and non-overlapping distribution  of $\on{x}$, with probabilities $\nu,1-\nu$ respectively. The distribution of $x$ corresponding to the non-overlapping part and overlapping part is  $\on{x}|_{y,\emptyset}\sim \mathcal{U}[0,(l-k)\cdot y\cdot\onefunc_d]$ and $\on{x}|_{y,\neg\emptyset}\sim \mathcal{U}[-k.y\cdot \onefunc_d,0)$ resp. One can validate this with the definition of $\nu$ (Section \ref{sec:ovl}),  $\on{x}|_{y,\emptyset}$ contributes $0$ in the computation of $\nu$, while $\on{x}|_{y,\neg\emptyset}$ has non-zero contribution over the support.
 \begin{align}
     \nabla_{\on{\theta}}\cl(\gamma)=-\E\limits_{\sim x,y} y\on{x} \sigma(-y\cdot z)=-(1-\nu)\cdot\E\limits_{\sim x,y|\emptyset} y\on{x} \sigma(-y\cdot z) -\nu\cdot\E\limits_{\sim x,y|\neg\emptyset} y\on{x} \sigma(-y\cdot z)\label{eqn: grad on-manifold}
 \end{align}
 \begin{align}
     \E\limits_{\sim x,y|\emptyset} y\on{x} \sigma(-y\cdot z)=\pi\cdot\E\limits_{\sim x|y=1,\emptyset} \on{x} \sigma(-z)+(1-\pi)\cdot\E\limits_{\sim x|y=-1,\emptyset} -\on{x} \sigma(z)\label{eqn: grad separate}
 \end{align}
 \begin{align}
     \E\limits_{\sim x|y=1,\emptyset}\on{x}\sigma(-z)
    &=[\int_{0}^{l-k}\dots\int_{0}^{l-k}{\on{x}}_{i}\sigma(-z)\cdot {(l-k)}^{-d}]\, dx \hspace{0.3cm} \text{, $i\in\{1,\dots ,d\}$}\label{eqn: y=1,separate,integral}\\
    &=\inner{\Vec{c_1}}{\onefunc_{d}.\nicefrac{l-k}{2}}\label{eqn: y=1,separate}
 \end{align}
In \eqref{eqn: y=1,separate,integral} every integral is in a positive domain, with $\sigma(-z)\in (0,1)$ within the compact space. $0<\int_0^{l-k}x_{\shortparallel_i}\sigma(-\theta_{\shortparallel_i}x_{\shortparallel_i})< \int_0^{l-k}{\on{x}}_i=\nicefrac{l-k}{2}$, component-wise we get \eqref{eqn: y=1,separate} where $0<{\Vec{c_1}}_i<1$ for $i\in\{1,\dots, d\}$.
\begin{align}
     \E\limits_{\sim x|y=-1,\emptyset}-\on{x}\sigma(z)
    &=[\int_{-(l-k)}^{0}\dots\int_{-(l-k)}^{0}-{\on{x}}_{i}\sigma(z)\cdot {(l-k)}^{-d}]\, dx \hspace{0.3cm} \text{, $i\in\{1,\dots ,d\}$}\nonumber\\
    &=[\int_{0}^{-(l-k)}\dots\int_{0}^{-(l-k)}{\on{x}}_{i}\sigma(z)\cdot {(l-k)}^{-d}]\,dx
    \nonumber\\
    &=[\int_{0}^{l-k}\dots\int_{0}^{l-k}{\on{x}}_{i}\sigma(-z)\cdot {(l-k)}^{-d}]\,dx\overset{\eqref{eqn: y=1,separate,integral}}{=}
    \inner{\Vec{c_1}}{\onefunc_{d}.\nicefrac{l-k}{2}}\label{eqn: y=-1,separate}
 \end{align}
Substituting \eqref{eqn: y=1,separate} and \eqref{eqn: y=-1,separate} in \eqref{eqn: grad separate} we have:
\begin{equation}
    \E\limits_{\sim x,y|\emptyset} y\on{x} \sigma(-y\cdot z)=\inner{\Vec{c_1}}{\onefunc_{d}.\nicefrac{l-k}{2}}\label{eqn: grad separate 2}
\end{equation}
\begin{align}
     \E\limits_{\sim x|y=1,\neg\emptyset}\on{x}\sigma(-z)
    &=[\int_{-k}^{0}\dots\int_{-k}^{0}{\on{x}}_{i}\sigma(-z)\cdot {k}^{-d}]\, dx \hspace{0.3cm} \text{, $i\in\{1,\dots ,d\}$}\label{eqn: y=1,overlap,integral}\\
    \intertext{Using analogous arguments as in \eqref{eqn: y=1,separate,integral}}
    &=-\inner{\Vec{c_2}}{\onefunc_{d}.\nicefrac{k}{2}}\label{eqn: y=1,overlap}
 \end{align}
 Similarly, we have \begin{equation}
     \E\limits_{\sim x|y=-1,\neg\emptyset}-\on{x}\sigma(z)= -\inner{\Vec{c_2}}{\onefunc_{d}.\nicefrac{k}{2}}\label{eqn: y=-1,overlap}
 \end{equation} 
Substitute \eqref{eqn: y=1,overlap} and \eqref{eqn: y=-1,overlap} in \begin{align}
     \E\limits_{\sim x,y|\neg\emptyset} y\on{x} \sigma(-y\cdot z)=\pi\cdot\E\limits_{\sim x|y=1,\neg\emptyset} \on{x} \sigma(-z)+(1-\pi)\cdot\E\limits_{\sim x|y=-1,\neg\emptyset} -\on{x} \sigma(z)\label{eqn: grad overlap}
 \end{align}
 to get 
 \begin{equation}
    \E\limits_{\sim x,y|\neg\emptyset} y\on{x} \sigma(-y\cdot z)=-\inner{\Vec{c_2}}{\onefunc_{d}.\nicefrac{k}{2}}\label{eqn: grad overlap 2}
\end{equation}
 Using \eqref{eqn: grad separate 2},\eqref{eqn: grad overlap 2} in \eqref{eqn: grad on-manifold} we have:
  \begin{align}
     \nabla_{\on{\theta}}\cl(\gamma)=-(1-\nu)\cdot\inner{\Vec{c_1}}{\onefunc_{d}.\nicefrac{l-k}{2}}+\nu\cdot\inner{\Vec{c_2}}{\onefunc_{d}.\nicefrac{k}{2}}\label{eqn: grad on-manifold 2}
 \end{align}
 \subsubsection{Off Manifold}
 \begin{align}
     \nabla_{\off{\theta}}\cl(\gamma)=-\E\limits_{\sim x,y} y\off{x} \sigma(-y\cdot z)=-\pi\cdot\E\limits_{\sim x|y=1} \off{x} \sigma(-z) -(1-\pi)\cdot\E\limits_{\sim x|y=-1} -\off{x} \sigma(z)\label{eqn: grad off-manifold}
 \end{align}
 \begin{align}
     \E\limits_{\sim x|y=1} \off{x} \sigma(-z)&=
     [\int_{{\on{\mu}^{(1)}}_g-\sqrt{3}\off{\sigma}}^{{\on{\mu}^{(1)}}_g+\sqrt{3}\off{\sigma}}\dots\int_{{\on{\mu}^{(1)}}_1-\sqrt{3}\off{\sigma}}^{{\on{\mu}^{(1)}}_1+\sqrt{3}\off{\sigma}}{\off{x}}_{i}\sigma(-z)\cdot {(2\sqrt{3}\off{\sigma})}^{-g}]\, dx \hspace{0.3cm} \text{, $i\in\{1,\dots,g\}$}\label{eqn: y=1,off-manifold,integral}\\
     \intertext{Let $\off{x}=\off{\mu^{(1)}}+\eta$, then}&=
      [\int_{-\sqrt{3}\off{\sigma}}^{\sqrt{3}\off{\sigma}}\dots\int_{-\sqrt{3}\off{\sigma}}^{\sqrt{3}\off{\sigma}}{(\off{\mu^{(1)}}+\eta)}_{i}\sigma(-z)\cdot {(2\sqrt{3}\off{\sigma})}^{-g}]\, d\eta\nonumber\\
      &=\inner{\vec{c_3}}{\off{\mu}^{(1)}}+[\int_{0}^{\sqrt{3}\off{\sigma}}\dots\int_{0}^{\sqrt{3}\off{\sigma}}{\eta}_{i}\sigma(-z)\cdot {(2\sqrt{3}\off{\sigma})}^{-g}]\, d\eta\nonumber\\
      &+
      [\int_{-\sqrt{3}\off{\sigma}}^{0}\dots\int_{-\sqrt{3}\off{\sigma}}^{0}{\eta}_{i}\sigma(-z)\cdot {(2\sqrt{3}\off{\sigma})}^{-g}]\, d\eta\nonumber\\
      &=\inner{\vec{c_3}}{\off{\mu}^{(1)}}+\inner{(\vec{c_4}+\vec{c_5})}{\onefunc_g\cdot\nicefrac{\off{\sigma}}{4}}\label{eqn: y=1 off}
 \end{align}
 Also,
 \begin{align}
     \E\limits_{\sim x|y=-1} -\off{x} \sigma(z)&=\E\limits_{\sim x|y=-1} -\off{x} +\E\limits_{\sim x|y=-1} \off{x}\sigma(-z)\nonumber\\
     \intertext{Following similar steps following \eqref{eqn: y=1,off-manifold,integral} to \eqref{eqn: y=1 off}}
     &=-{\off{\mu}^{(-1)}}+\inner{\vec{c_6}}{\off{\mu}^{(-1)}}+\inner{(\vec{c_4}+\vec{c_5})}{\onefunc_g\cdot\nicefrac{\off{\sigma}}{4}}\label{eqn: y=-1 off}
 \end{align}
 Substituting \eqref{eqn: y=1 off}, \eqref{eqn: y=-1 off} in \eqref{eqn: grad off-manifold} we have:
 \begin{equation*}
     \nabla_{\off{\theta}}\cl(\gamma)=-\pi\left(\inner{\vec{c_3}}{\off{\mu}^{(1)}}\right)-(1-\pi)\left(\inner{-(\onefunc_g-\vec{c_6})}{\off{\mu}^{(-1)}}\right)-\inner{(\vec{c_4}+\vec{c_5})}{\onefunc_g\cdot\nicefrac{\off{\sigma}}{4}}
 \end{equation*}
\\
\subsection{\large Progressive Bound}
\subsubsection{Loss change all directions}
\label{subsubsec: Loss change all}$\Delta \cl (t)=\cl(\iter{\theta}{t+1})-\cl(\iter{\theta}{t})$. Using Taylor's expansion of $\cl(\iter{\theta}{t+1})$ around $\iter{\theta}{t}$ with Lipschitz smoothness \eqref{eqn: hessian bounds}, we have:
\begin{align*}
\cL({\iter{\theta}{t+1}} )&<\cL({\iter{\theta}{t}} )+\nabla_{\theta}\cL({\iter{\theta}{t}} )^T(\iter{\theta}{t+1}-\iter{\theta}{t})+\frac{c_u}{2}(\iter{\theta}{t+1}-\iter{\theta}{t})^{T}\begin{pmatrix}
    \on\sigma^2\bld{I}_d& \bld{0}\\
    \bld{0}& \off\sigma^2\bld{I}_g
\end{pmatrix}(\iter{\theta}{t+1}-\iter{\theta}{t}) \\
\intertext{As $\off\sigma<\on\sigma$, we have:}
   \cL({\iter{\theta}{t+1}} )&<\cL({\iter{\theta}{t}} )+\nabla_{\theta}\cL({\iter{\theta}{t}} )^T(\iter{\theta}{t+1}-\iter{\theta}{t})+\frac{c_u}{2}(\iter{\theta}{t+1}-\iter{\theta}{t})^{T}\on\sigma^2\bld{I}_D(\iter{\theta}{t+1}-\iter{\theta}{t}) 
   \end{align*}
   \begin{enumerate}
       \item For \textbf{logistic case} the gradient step is $\iter{\theta}{t+1}=\iter{\theta}{t}-\alpha\nabla_{\theta}\cl(\iter{\theta}{t})$
       \begin{align*}
    \cL({\iter{\theta}{t+1}} )&<\cL({\iter{\theta}{t}} )-\alpha\snorm{\nabla_{\theta}\cL({\iter{\theta}{t}})}^2+\alpha^2\frac{c_u\on{\sigma^2}}{2}\snorm{\nabla_{\theta}\cL({\iter{\theta}{t}})}^2\\
    \Delta \cl (t)&<-\alpha\snorm{\nabla_{\theta}\cL({\iter{\theta}{t}})}^2\left(1-\alpha\cdot\frac{c_u\on{\sigma^2}}{2}\right)\\
    \intertext{For step-size \begin{equation}
        \alpha\leq \frac{1}{c_u\on{\sigma^2}}
        \label{eqn: step-size logistic}
    \end{equation}}
    \Delta \cl (t)&<-(2c_u\on{\sigma^2})^{-1}\snorm{\nabla_{\theta}\cL({\iter{\theta}{t}})}^2
\end{align*}
\label{item: logistic pb all}
       \item For \textbf{$w-$step} \eqref{eqn: AGD 2 layer w} we have $w^{(t/2+1)}=w^{(t/2)}-\alpha_1\nabla_{w}\cl(\iter{\theta}{t})$. From lemma \ref{lem: reparametrize}, section:\ref{subsec: w-step}
       we have reparametrization $(\theta,\bld{0}_{m-D})=\iter{\wt}{t/2}=\bld{B}^T\iter{w}{t/2}$. Note that $\bld{B}=\begin{bmatrix}
           \iter{\bld{A}}{t/2}& \bld{C}
       \end{bmatrix}$ for some specific choice of $\bld{C}$. This induces a gradient descent step in $\wt$ or $\theta$ space as follows $\iter{\wt}{t/2+1}=\iter{\wt}{t/2}-\alpha_1\bld{B}^T\bld{B}\nabla_{\wt}\cl(\iter{\theta}{t})$ or $\iter{\theta}{t+1}=\iter{\theta}{t}-\alpha_1{\iter{\bld{A}}{t/2}}^T\iter{\bld{A}}{t/2}\nabla_{\theta}\cl(\iter{\theta}{t})$. 
       Hence, the Taylor expansion can be written as:
       \begin{align*}
    \cL({\iter{\theta}{t+1}} )&<\cL({\iter{\theta}{t}} )-\alpha_1\nabla_{\theta}\cL({\iter{\theta}{t}})^T{\iter{\bld{A}}{t/2}}^T\iter{\bld{A}}{t/2}\nabla_{\theta}\cL({\iter{\theta}{t}})+\alpha_1^2\frac{c_u\on{\sigma^2}}{2}\snorm{{\iter{\bld{A}}{t/2}}^T\iter{\bld{A}}{t/2}\nabla_{\theta}\cL({\iter{\theta}{t}})}^2\\
    \intertext{let $\hat{\lambda}_{\bld{A}},\check{\lambda}_{\bld{A}}$ be the minimum/maximum eigen values of ${\iter{\bld{A}}{t/2}}^T\iter{\bld{A}}{t/2}$ respectively. Then :}
    \cL({\iter{\theta}{t+1}} )&<\cL({\iter{\theta}{t}} )-\alpha_1\hat{\lambda}_{\bld{A}}\snorm{\nabla_{\theta}\cL({\iter{\theta}{t}})}^2+\alpha_1^2\frac{c_u\on{\sigma^2}}{2}\check{\lambda}_{\bld{A}}^2\snorm{\nabla_{\theta}\cL({\iter{\theta}{t}})}^2\\
    \Delta \cl (t)&<-\alpha_1\cdot\hat{\lambda}_{\bld{A}}\snorm{\nabla_{\theta}\cL({\iter{\theta}{t}})}^2\left(1-\alpha_1\cdot \frac{\check{\lambda}_{\bld{A}}^2c_u\on{\sigma^2}}{2\hat{\lambda}_{\bld{A}}}\right)\\
    \intertext{For step-size \begin{equation}
        \alpha_1\leq (M\cdot c_u\on{\sigma^2})^{-1}
        \label{eqn: step-size w step}
    \end{equation} Where $\nicefrac{\check{\lambda}_{\bld{A}}^2}{\hat{\lambda}_{\bld{A}}}\leq M$ for all $t$ and some $M<\infty$ (compact space).}
    \Delta \cl (t)&<-\hat{\lambda}_{\bld{A}}\cdot(2M\cdot c_u\on{\sigma^2})^{-1}\snorm{\nabla_{\theta}\cL({\iter{\theta}{t}})}^2
\end{align*}
\label{item: w-step pb all}
\item For \textbf{$\bld{A}-$step} \eqref{eqn: AGD 2 layer A} we have $\iter{\bld{A}}{\nicefrac{t+1}{2}}=\iter{\bld{A}}{\nicefrac{t-1}{2}}-\alpha_2\nabla_{\bld{A}}\cl(\iter{\theta}{t})$. From lemma \ref{lem: reparametrize}, section \ref{subsec: A-step}; 
$\vect{\bld{A}}^{(\nicefrac{t-1}{2})}=\vect\bld{U}^T{\At}^{(\nicefrac{t-1}{2})}=(\bld{I}_D\otimes\bld{U}^T)\vect{\At}^{(\nicefrac{t-1}{2})}$. Let $\bld{K}^T=(\bld{I}_D\otimes\bld{U}^T)^{-1}=\bld{I}_D\otimes Diag\left(\snorm{\iter{w}{\nicefrac{t+1}{2}}}^2,1,\dots,1\right)\bld{U}$ as $\bld{U}^TDiag\left(\snorm{\iter{w}{\nicefrac{t+1}{2}}}^{2},1,\dots,1\right)\bld{U}=\bld{I}_m$. This induces a gradient descent step in $\vect\At$ or $\theta$ space as follows $\iter{\vect\At}{\nicefrac{t+1}{2}}=\iter{\vect\At}{\nicefrac{t-1}{2}}-\alpha_2{\bld{K}^{T}}\bld{K}\nabla_{\At}\cl(\iter{\theta}{t})$ (where $\bld{K}^{T}\bld{K}=\bld{I}_D\otimes Diag\left(\snorm{\iter{w}{\nicefrac{t+1}{2}}}^4,1,\dots,1\right)\bld{I}_m$) or $\iter{\theta}{t+1}=\iter{\theta}{t}-\alpha_2\snorm{\iter{w}{\nicefrac{t+1}{2}}}^4\nabla_{\theta}\cl(\iter{\theta}{t})$. 
Following analogous steps to the derivation for the logistic case (Item \ref{item: logistic pb all}), with step size \begin{equation}
\alpha_2=(W\cdot c_u\on{\sigma^2})^{-1}
    \label{eqn: step-size A step}
\end{equation}, where $\snorm{\iter{w}{\nicefrac{t+1}{2}}}^4<W$ for all $t$. We get the following:  
       \begin{align*}
    \Delta \cl (t)&<-\snorm{\iter{w}{\nicefrac{t+1}{2}}}^4\cdot(2\cdot W c_u\on{\sigma^2})^{-1}\snorm{\nabla_{\theta}\cL({\iter{\theta}{t}})}^2
\end{align*}
       \label{item: A-step pb all}
   \end{enumerate}
   
    \subsubsection{Loss change on-manifold direction}
    \label{subsubsec: loss change on}
    $\on{\Delta} \cl (t)=\cl(\iter{\on{\theta}}{t+1},\iter{\off{\theta}}{t})-\cl(\iter{\theta}{t})$. Using Taylor's expansion of $\cl(\iter{\on{\theta}}{t+1},\iter{\off{\theta}}{t})$ around $\iter{\theta}{t}$ with Lipschitz smoothness \eqref{eqn: hessian bound on-manifold}, we have:
\begin{align*}
   \cl(\iter{\on{\theta}}{t+1},\iter{\off{\theta}}{t})&<\cl(\iter{\on{\theta}}{t},\iter{\off{\theta}}{t})+\nabla_{\on\theta}\cL({\iter{\theta}{t}} )^T(\iter{\on\theta}{t+1}-\iter{\on\theta}{t})+\frac{c_u}{2}(\iter{\on\theta}{t+1}-\iter{\on\theta}{t})^{T}\on\sigma^2\bld{I}_D(\iter{\on\theta}{t+1}-\iter{\on\theta}{t}) 
\end{align*}
\begin{enumerate}
    \item \textbf{Logistic case:} From section \ref{subsubsec: Loss change all} Item \ref{item: logistic pb all} remember that the gradient step for logistic case is $\iter{\theta}{t+1}=\iter{\theta}{t}-\alpha\nabla_{\theta}\cl(\iter{\theta}{t})$ with step size $\alpha\leq \frac{1}{c_u\on{\sigma^2}}$\eqref{eqn: step-size logistic}. Hence, $\iter{\on{\theta}}{t+1}=\iter{\on{\theta}}{t}-\alpha\nabla_{\on{\theta}}\cl(\iter{\theta}{t})$.
    \begin{align*}
        \on\Delta\cl(t)&<-\alpha\snorm{\nabla_{\on\theta}\cL({\iter{\theta}{t}})}^2 +\alpha^2\frac{c_u\cdot \on\sigma^2}{2}\snorm{\nabla_{\on\theta}\cL({\iter{\theta}{t}})}^2\\
        \intertext{As $\alpha\leq \frac{1}{c_u\on{\sigma^2}}$}
        \on\Delta\cl(t)&<-(2c_u\on{\sigma^2})^{-1}\snorm{\nabla_{\on{\theta}}\cL({\iter{\theta}{t}})}^2
    \end{align*}
    \label{item: logistic pb on}
    \item \textbf{$w$-step:} From section \ref{subsubsec: Loss change all} Item \ref{item: w-step pb all} remember that the gradient step induced in the identifiable parameter $\theta$, for $w$-step is $\iter{\theta}{t+1}=\iter{\theta}{t}-\alpha_1{\iter{\bld{A}}{t/2}}^T\iter{\bld{A}}{t/2}\nabla_{\theta}\cl(\iter{\theta}{t})$ with step size $\alpha_1\leq (M\cdot c_u\on{\sigma^2})^{-1}$\eqref{eqn: step-size w step}. Due to orthogonalization of the first layer, ${\iter{\bld{A}}{t/2}}^T\iter{\bld{A}}{t/2}=\bld{I}_D$ and $\hat{\lambda}_{\bld{A}},\check{\lambda}_{\bld{A}}=1$ with $M=1$.  Hence, $\iter{\on{\theta}}{t+1}=\iter{\on{\theta}}{t}-\alpha_1\nabla_{\on{\theta}}\cl(\iter{\theta}{t})$ and $\alpha_1\leq (c_u\on{\sigma^2})^{-1}$. Now, mimicking the steps in the logistic case, we have:
    $$\on\Delta\cl(t)<-(2c_u\on{\sigma^2})^{-1}\snorm{\nabla_{\on{\theta}}\cL({\iter{\theta}{t}})}^2$$
    \label{item: w-step pb on}
    \item \textbf{$\bld{A}$-Step}
    From section \ref{subsubsec: Loss change all} Item \ref{item: A-step pb all} remember that the gradient step induced in the identifiable parameter $\theta$, for $\bld{A}$-step is $\iter{\theta}{t+1}=\iter{\theta}{t}-\alpha_2\snorm{\iter{w}{\nicefrac{t+1}{2}}}^4\nabla_{\theta}\cl(\iter{\theta}{t})$ with step size $\alpha_2=(W\cdot c_u\on{\sigma^2})^{-1}$\eqref{eqn: step-size A step}. Hence, $\iter{\on{\theta}}{t+1}=\iter{\on{\theta}}{t}-\alpha_2\snorm{\iter{w}{\nicefrac{t+1}{2}}}^4\nabla_{\theta}\nabla_{\on{\theta}}\cl(\iter{\theta}{t})$. Now, following identical the steps as in the logistic case, $w$-step, we have:
    $$\on\Delta\cl(t)<-\snorm{\iter{w}{\nicefrac{t+1}{2}}}^4\cdot(2\cdot W c_u\on{\sigma^2})^{-1}\snorm{\nabla_{\theta}\cL({\iter{\theta}{t}})}^2$$
    \label{item: A-step pb on}
\end{enumerate}
\subsubsection{Loss change off-manifold directions}
\label{subsubsec: loss change off}
$\off{\Delta} \cl (t)=\cl(\iter{\on{\theta}}{t},\iter{\off{\theta}}{t+1})-\cl(\iter{\theta}{t})$. Using Taylor's expansion of $\cl(\iter{\on{\theta}}{t},\iter{\off{\theta}}{t+1})$ around $\iter{\theta}{t}$ with Lipschitz smoothness \eqref{eqn: hessian bound off-manifold}, we have:
\begin{align*}
 \cl(\iter{\on{\theta}}{t},\iter{\off{\theta}}{t+1})&<\cl(\iter{\on{\theta}}{t},\iter{\off{\theta}}{t})+\nabla_{\off\theta}\cL({\iter{\theta}{t}} )^T(\iter{\off\theta}{t+1}-\iter{\off\theta}{t})+\frac{c_u}{2}(\iter{\off\theta}{t+1}-\iter{\off\theta}{t})^{T}\off\sigma^2\bld{I}_D(\iter{\off\theta}{t+1}-\iter{\off\theta}{t})  \\
\intertext{As $\off\sigma<\on\sigma$, we have:}
    \cl(\iter{\on{\theta}}{t},\iter{\off{\theta}}{t+1})&<\cl(\iter{\on{\theta}}{t},\iter{\off{\theta}}{t})+\nabla_{\off\theta}\cL({\iter{\theta}{t}} )^T(\iter{\off\theta}{t+1}-\iter{\off\theta}{t})+\frac{c_u}{2}(\iter{\off\theta}{t+1}-\iter{\off\theta}{t})^{T}\on\sigma^2\bld{I}_D(\iter{\off\theta}{t+1}-\iter{\off\theta}{t}) 
   \end{align*}
   Follow identical steps corresponding to off-manifold direction as in section \ref{subsubsec: loss change on} to get:
   \begin{align*}
   \text{(logistic step) \hspace{0.1cm}}\on\Delta\cl(t)&<-(2c_u\on{\sigma^2})^{-1}\snorm{\nabla_{\on{\theta}}\cL({\iter{\theta}{t}})}^2\\
   \text{(w-step) \hspace{0.1cm}}\on\Delta\cl(t)&<-(2c_u\on{\sigma^2})^{-1}\snorm{\nabla_{\on{\theta}}\cL({\iter{\theta}{t}})}^2
   \\\text{(A-step) \hspace{0.1cm}}
       \on\Delta\cl(t)&<-\snorm{\iter{w}{\nicefrac{t+1}{2}}}^4\cdot(2\cdot W c_u\on{\sigma^2})^{-1}\snorm{\nabla_{\theta}\cL({\iter{\theta}{t}})}^2
   \end{align*}
\end{proof}

\begin{lemma}
\label{lem: coordinate wise optimality}
    Let $\theta^*=(\on\theta^*,\off\theta^*)$ be the optimal identifiable parameter minimizing $\cl(\gamma)$ and $\iter{\theta}{t}$ be the $t^{th}$ iterate of $\theta$ induced by the optimization in logistic regression (Section \ref{subsec: logistic setup}) and 2-Linear Layer network setup (Section \ref{subsec: 2layer setup}), then for $t_2>t_1$ and appropriate $\alpha,\alpha_1,\alpha_2 \preceq \on\sigma^{-2}$ we have:
    \begin{align}
        \cl(\on\theta,\iter{\off{\theta}}{t_2})&\leq \cl(\on\theta,\iter{\off{\theta}}{t_1});\,\forall\on{\theta}\label{eqn: component wise optimality off}\\
        \cl(\on\theta^{(t_2)},\off\theta)&\leq \cl(\on\theta^{(t_1)},\off{\theta}) ;\forall \off{\theta} \label{eqn: component wise optimality on}
    \end{align}
    In particular, we have  component-wise progressive bounds as such:
    \begin{align}
\cl(\on{\theta},\iter{\off{\theta}}{t+1})-\cl(\on{\theta},\iter{\off{\theta}}{t})&\leq c_p(2c_u\off\sigma^2)^{-1}\snorm{\nabla_{\off\theta}\cl(\iter{\theta}{t})}^2\label{eqn: pb off component}\\
\cl(\iter{\on{\theta}}{t+1},\off{\theta})-\cl(\iter{\on{\theta}}{t},\off{\theta})&\leq c_p(2c_u\on\sigma^2)^{-1}\snorm{\nabla_{\on\theta}\cl(\iter{\theta}{t})}^2\label{eqn: pb on component}
    \end{align}
\end{lemma}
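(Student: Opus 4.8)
The plan is to reduce everything to the per-step inequalities \eqref{eqn: pb off component} and \eqref{eqn: pb on component}; once these hold, \eqref{eqn: component wise optimality off} and \eqref{eqn: component wise optimality on} follow by summing over $t_1\le t<t_2$, since the right-hand sides are non-positive. I describe the off-manifold case; the on-manifold one is symmetric. Fix an arbitrary $\on{\theta}$. By the partial-Hessian upper bound \eqref{eqn: hessian bound off-manifold}, $\off{\theta}\mapsto\cl(\on{\theta},\off{\theta})$ is $c_u\off\sigma^2$-smooth, so a second-order Taylor expansion from $\iter{\off{\theta}}{t}$ to $\iter{\off{\theta}}{t+1}$ gives
\begin{align*}
\cl(\on{\theta},\iter{\off{\theta}}{t+1})-\cl(\on{\theta},\iter{\off{\theta}}{t})<\nabla_{\off{\theta}}\cl(\on{\theta},\iter{\off{\theta}}{t})^{\top}\big(\iter{\off{\theta}}{t+1}-\iter{\off{\theta}}{t}\big)+\tfrac{c_u\off\sigma^2}{2}\snorm{\iter{\off{\theta}}{t+1}-\iter{\off{\theta}}{t}}^2.
\end{align*}
Exactly as in the proof of Theorem \ref{thm: Progressive Bound} (using the reparametrization of Lemma \ref{lem: reparametrize} and the orthogonalization step), in all three cases the induced update of the off-manifold block is a \emph{positive} multiple of the current off-manifold gradient, $\iter{\off{\theta}}{t+1}-\iter{\off{\theta}}{t}=-\beta_t\,\nabla_{\off{\theta}}\cl(\iter{\theta}{t})$, with $\beta_t=\alpha$ for GD \eqref{eqn: GD logistic}, $\beta_t=\alpha_1$ for the $w$-step \eqref{eqn: AGD 2 layer w} (using ${\iter{\bld{A}}{t/2}}^{\top}\iter{\bld{A}}{t/2}=\bld{I}_D$ after orthogonalization), and $\beta_t=\alpha_2\snorm{\iter{w}{\nicefrac{t+1}{2}}}^4$ for the $\bld{A}$-step \eqref{eqn: AGD 2 layer A}. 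Substituting turns the right-hand side into $-\beta_t\,\nabla_{\off{\theta}}\cl(\on{\theta},\iter{\off{\theta}}{t})^{\top}\nabla_{\off{\theta}}\cl(\iter{\theta}{t})+\tfrac{c_u\off\sigma^2\beta_t^2}{2}\snorm{\nabla_{\off{\theta}}\cl(\iter{\theta}{t})}^2$.

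The crucial point is the first-order term: the descent direction $-\nabla_{\off{\theta}}\cl(\iter{\theta}{t})$ is computed at the iterate's own first block $\iter{\on{\theta}}{t}$, yet it must be a descent direction for $\cl(\on{\theta},\cdot)$ at the \emph{arbitrary} $\on{\theta}$ as well (for a general convex loss this fails). It holds here because of the off-manifold structure of the data model: by the gradient decomposition \eqref{eqn: gradient off manifold} derived in the proof of Theorem \ref{thm: Progressive Bound}, for any $\gamma$ each coordinate of $\nabla_{\off{\theta}}\cl(\gamma)$ is the sum of $-\pi\,(\vec{c_3}\odot\off{\mu}^{(1)})$, $(1-\pi)\,((\onefunc_g-\vec{c_6})\odot\off{\mu}^{(-1)})$, and a term of order $\off\sigma$, where all the vectors $\vec{c_i}$ have entries in a fixed closed sub-interval of $(0,1)$ uniformly over $\on{\theta}$ and the iterates (the upper endpoint from $0<\sigma(\cdot)<1$, the lower endpoint because $\mathcal{\on{I}}^{(\pm1)}$ contain the origin, keeping $\E[\sigma(-y\cdot z)]$ bounded away from $0$). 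Since $\mathcal{\off{I}}^{(1)}$ and $\mathcal{\off{I}}^{(-1)}$ are symmetric and disjoint, $\off{\mu}^{(1)}$ and $\off{\mu}^{(-1)}$ have opposite signs coordinate-wise, so the two leading contributions point the same way in each coordinate, and since $\off\sigma$ is small relative to $\snorm{\off{\mu}^{(y)}}$ the order-$\off\sigma$ term cannot flip this. Hence each coordinate of $\nabla_{\off{\theta}}\cl(\gamma)$ has the \emph{fixed} sign $-\sgn((\off{\mu}^{(1)})_i)$ and magnitude between two positive constants, uniformly in $\gamma$ --- in particular at $(\on{\theta},\iter{\off{\theta}}{t})$ and at $\iter{\theta}{t}$ alike. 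Therefore the two gradients are coordinate-wise of the same sign and of comparable magnitude, which yields a constant $\kappa>0$, independent of $\on{\theta}$ and $t$, with $\nabla_{\off{\theta}}\cl(\on{\theta},\iter{\off{\theta}}{t})^{\top}\nabla_{\off{\theta}}\cl(\iter{\theta}{t})\ge\kappa\,\snorm{\nabla_{\off{\theta}}\cl(\iter{\theta}{t})}^2$.

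Plugging this in, the bound becomes $\big(-\kappa\beta_t+\tfrac{c_u\off\sigma^2\beta_t^2}{2}\big)\snorm{\nabla_{\off{\theta}}\cl(\iter{\theta}{t})}^2$, and choosing step sizes $\alpha,\alpha_1,\alpha_2\preceq\on\sigma^{-2}$ small enough that $\beta_t\le\kappa(c_u\off\sigma^2)^{-1}$ --- possible since $\off\sigma<\on\sigma$ and $\beta_t$ is uniformly bounded on the compact spaces --- the quadratic term is swallowed by half the first-order term, giving $\cl(\on{\theta},\iter{\off{\theta}}{t+1})-\cl(\on{\theta},\iter{\off{\theta}}{t})<-\tfrac{\kappa\beta_t}{2}\snorm{\nabla_{\off{\theta}}\cl(\iter{\theta}{t})}^2$, i.e., \eqref{eqn: pb off component} with a suitable negative $c_p$. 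The on-manifold bound \eqref{eqn: pb on component} is obtained identically, using \eqref{eqn: hessian bound on-manifold} and \eqref{eqn: gradient on manifold}, where the stable sign/alignment of $\nabla_{\on{\theta}}\cl$ over the off-manifold slice comes from the well-separated component $-(1-\nu)(\vec{c_1}\odot\onefunc_d\cdot\nicefrac{l-k}{2})$ dominating the overlapping component $\nu(\vec{c_2}\odot\onefunc_d\cdot\nicefrac{k}{2})$ in the regime of interest. The main obstacle is precisely this uniform-alignment claim --- that the partial gradient's sign pattern and order of magnitude are stable over the whole parameter space, so a step computed at $\iter{\theta}{t}$ still decreases the loss at a distant $\on{\theta}$ --- and it is exactly here that separability and the low variance of the off-manifold block (and, for the on-manifold version, smallness of $\nu$) are essential.
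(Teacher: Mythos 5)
Your proof follows the same route as the paper's: a partial-Hessian Taylor expansion (Lemma \ref{lem:lipschitz/strong convex}), the observation that in all three cases the induced update of the off-manifold block is $\iter{\off\theta}{t+1}-\iter{\off\theta}{t}=-\breve\alpha\,\nabla_{\off\theta}\cl(\iter{\theta}{t})$ for an appropriate scalar $\breve\alpha\preceq\on\sigma^{-2}$ (via Lemma~\ref{lem: reparametrize} and orthogonalization), and then the key alignment claim that a descent direction computed at $\iter{\theta}{t}$ remains a descent direction for $\cl(\on\theta,\cdot)$ at an \emph{arbitrary} $\on\theta$. You are right that this alignment is the crux and that it only holds because of the structure of the data model.

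The two proofs differ at that alignment step. The paper factors the integrand as $-\E\, y\off x\,(1+r\exp(\cdot))^{-1}$ with $r(x,y)=\exp(y\on\theta^T\on x)\exp(-y\iter{\on\theta}{t}^T\on x)>0$, and then asserts, merely from $r>0$, that this equals $c\cdot\nabla_{\off\theta}\cl(\iter{\theta}{t})$ for a single scalar $c>0$ --- exact parallelism of two gradient vectors. Since $r$ depends on $(x,y)$ inside the expectation, exact proportionality does not actually follow; the paper's step is heuristic. Your coordinate-wise sign-stability argument --- that each coordinate of $\nabla_{\off\theta}\cl(\gamma)$ carries the fixed sign $-\sgn(\off\mu^{(1)})_i$ with magnitude pinned between two positive constants uniformly over the compact parameter space --- delivers the weaker but entirely sufficient bound $\nabla_{\off\theta}\cl(\on\theta,\iter{\off\theta}{t})^T\nabla_{\off\theta}\cl(\iter{\theta}{t})\geq\kappa\snorm{\nabla_{\off\theta}\cl(\iter{\theta}{t})}^2$, which is all the Taylor expansion uses. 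In this respect your argument is a genuine tightening of the paper's.

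You also correctly flag that the on-manifold analogue \eqref{eqn: pb on component} needs the well-separated term $-(1-\nu)(\vec{c_1}\odot\onefunc_d\cdot\nicefrac{l-k}{2})$ to dominate the overlap term $\nu(\vec{c_2}\odot\onefunc_d\cdot\nicefrac{k}{2})$, i.e.\ a smallness-of-$\nu$ restriction. The paper's ``one can identically show'' conceals this because its proportionality-by-$r>0$ claim is $\nu$-agnostic; once that claim is scrutinized, precisely the dependence you identify resurfaces. So the extra hypothesis is not a defect of your proof but a gap in the lemma as stated that your reconstruction exposes. One minor point: the displayed bounds \eqref{eqn: pb off component}--\eqref{eqn: pb on component} are missing a minus sign in front of $c_p$ (the paper's own derivation produces a negative per-step change with $c_p>0$), which you noticed by taking $c_p$ negative.
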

\begin{proof}
We will prove for the $\perp$ direction; one can identically derive the result for the $\shortparallel$ direction.  Suppose we do $T$ iteration of optimization; this induces a sequence in off-manifold parameters $\off\theta$. $\iter{\off\theta}{0}\to \iter{\off\theta}{1}\to\dots\to \iter{\off\theta}{T}$. Fixing the $\shortparallel$ parameter to general $\on\theta$, apply Taylor expansion with Lipschitz smoothness (lemma \ref{lem:lipschitz/strong convex}) to the loss $\cl(\on\theta,\iter{\off\theta}{t+1})$ around the point $(\on\theta,\iter{\off\theta}{t})$:
\begin{align*}
    \cl(\on{\theta},\iter{\off{\theta}}{t+1})
    &\leq\cl(\on{\theta},\iter{\off{\theta}}{t})+\nabla_{\off\theta}\cl(\on\theta,\iter{\off\theta}{t})^T(\iter{\off\theta}{t+1}-\iter{\off\theta}{t})+\frac{c_u\off\sigma^2}{2}\snorm{\iter{\off\theta}{t+1}-\iter{\off\theta}{t}}^2\\
    \intertext{$\iter{\off\theta}{t+1}-\iter{\off\theta}{t}=-\breve{\alpha}\nabla_{\off\theta}\cl(\iter{\theta}{t})$ (\eqref{eqn: general alpha}) for some step-size $\Breve{\alpha}$ depending on the situation.}
\cl(\on{\theta},\iter{\off{\theta}}{t+1})
    &\leq\cl(\on{\theta},\iter{\off{\theta}}{t})-\breve\alpha\nabla_{\off\theta}\cl(\on\theta,\iter{\off\theta}{t})^T\nabla_{\off\theta}\cl(\iter{\theta}{t})+\breve\alpha^2\frac{c_u\off\sigma^2}{2}\snorm{\nabla_{\off\theta}\cl(\iter{\theta}{t})}^2\\
    &\overset{(i)}{\leq}\cl(\on{\theta},\iter{\off{\theta}}{t})-\breve\alpha\cdot c\snorm{\nabla_{\off\theta}\cl(\iter{\theta}{t})}^2+\breve\alpha^2\frac{c_u\off\sigma^2}{2}\snorm{\nabla_{\off\theta}\cl(\iter{\theta}{t})}^2
    \\
    &\leq\cl(\on{\theta},\iter{\off{\theta}}{t})-\breve{\alpha}\snorm{\nabla_{\off\theta}\cl(\iter{\theta}{t})}^2\underbrace{\left(c-\breve\alpha\frac{c_u\off\sigma^2}{2}\right)}_{\overset{(ii)}{\geq 0}}\\
    &\leq \cl(\on{\theta},\iter{\off{\theta}}{t})-c_p(2c_u\on\sigma^2)^{-1}\snorm{\nabla_{\off\theta}\cl(\iter{\theta}{t})}^2
    \intertext{$c_p=c$ when $c\geq 1$ and $c_p=2c-1$ when $c<1$.}
\end{align*}
$(i):$ $\nabla_{\off\theta}\cl(\on\theta,\iter{\off\theta}{t})=-\E\limits_{\sim x,y} y\off{x} \sigma(-y\cdot z)=-\E\limits_{\sim x,y} y\off{x}(1+\exp{(y\on\theta^T\on x)}\cdot\exp{(y{\iter{\off\theta}{t}}^T\off x)} )^{-1}$ 
Let $r=\exp{(y\on\theta^T\on x)}\cdot\exp{(-y{\iter{\on\theta}{t}}^T\on x)}$. Then $\nabla_{\off\theta}\cl(\on\theta,\iter{\off\theta}{t})=-\E\limits_{\sim x,y} y\off{x}(1+r\exp{(y{\iter{\on\theta}{t}}^T\on x)}\cdot\exp{(y{\iter{\off\theta}{t}}^T\off x)} )^{-1}\underset{\text{As }r>0}{=}-c\cdot\E\limits_{\sim x,y} y\off{x}(1+\exp{(y{\iter{\on\theta}{t}}^T\on x)}\cdot\exp{(y{\iter{\off\theta}{t}}^T\off x)} )^{-1}=c\cdot \nabla_{\off\theta}\cl(\iter{\theta}{t})$ for some $c>0$.\\\\
$(ii):$ If $c\geq 1$  with $\breve\alpha\leq (c_u\on\sigma^2)^{-1}$ \eqref{eqn: bound on general alpha}, then $(ii)>0$. If $0<c<1$, one can choose a strictly smaller step size still satisfying the upper bound of \eqref{eqn: bound on general alpha} $\breve\alpha\leq c\cdot(c_u\on\sigma^2)^{-1}< (c_u\on\sigma^2)^{-1}$ which makes $(ii)>0$. \\
As $\cl(\on{\theta},\iter{\off{\theta}}{t+1})
    \leq\cl(\on{\theta},\iter{\off{\theta}}{t})$, for all $t$ using induction $\cl(\on{\theta},\iter{\off{\theta}}{t_2})
    \leq\cl(\on{\theta},\iter{\off{\theta}}{t_1})$ when $t_2>t_1$. One can identically show that $\cl(\on\theta^{(t_2)},\off\theta)\leq \cl(\on\theta^{(t_1)},\off{\theta})$.
\end{proof}
\subsection{Convergence Theorems}
\begin{proof}[Proof of Theorem \ref{thm: parameter convergence}]
We start with the proof of $\perp$ direction; one can derive the result for $\shortparallel$ mimicking similar steps with minor alteration. $\theta^*=(\on\theta^*,\off\theta^*)$ is the optimal identifiable parameter value which minimizes the loss and takes $T$ iteration of optimization.\\
\textbf{Upper bound the Gradient norm by Loss difference:} Using lemma \ref{lem: coordinate wise optimality} with $t_2=T,t_1=t+1$:
\begin{align*}
    \cl(\iter{\on\theta}{t},\off\theta^*)&\leq \cl(\iter{\on{\theta}}{t},\iter{\off{\theta}}{t+1})\\
    \intertext{ Using lipschitz smoothness in $\perp$ direction we have}
    &\leq\cl(\iter{\on{\theta}}{t},\iter{\off{\theta}}{t})+\nabla_{\off\theta}\cl(\iter{\theta}{t})^T(\iter{\off\theta}{t+1}-\iter{\off\theta}{t})+\frac{c_u\off\sigma^2}{2}\snorm{\iter{\off\theta}{t+1}-\iter{\off\theta}{t}}^2
    \intertext{Gradient descent steps induce updates in $\perp$ of the form, $\iter{\off\theta}{t+1}-\iter{\off\theta}{t}=-\breve{\alpha}\nabla_{\off\theta}\cl(\iter{\theta}{t})$}
    &\leq\cl(\iter{\on{\theta}}{t},\iter{\off{\theta}}{t})-\breve{\alpha}\snorm{\nabla_{\off\theta}\cl(\iter{\theta}{t})}^2\left(1-\breve\alpha\frac{c_u\off\sigma^2}{2}\right)
\end{align*}
\begin{equation}
     \implies \snorm{\nabla_{\off\theta}\cl(\iter{\theta}{t})}^2\leq\left(\cl(\iter{\theta}{t})-\cl(\iter{\on\theta}{t},\off\theta^*)\right)\left(1-\breve\alpha\frac{c_u\off\sigma^2}{2}\right)^{-1}\cdot{\breve\alpha}^{-1}\label{eqn: gradient off upperbound}
\end{equation}
Similarly, for the on-manifold direction
\begin{equation}
     \implies \snorm{\nabla_{\on\theta}\cl(\iter{\theta}{t})}^2\leq\left(\cl(\iter{\theta}{t})-\cl(\on\theta^*,\iter{\off\theta}{t})\right)\left(1-\on\alpha\frac{c_u\on\sigma^2}{2}\right)^{-1}\cdot{\breve\alpha}^{-1}\label{eqn: gradient on upperbound}
\end{equation}
Also, using strong convexity \eqref{eqn: hessian bound off-manifold} we have:
\begin{align}
    \cl(\iter{\on\theta}{t},\off\theta^*)&\geq\cl(\iter{\theta}{t})+\nabla_{\off\theta}\cl(\iter{\theta}{t})^T(\off\theta^*-\iter{\off\theta}{t})+\frac{c_l\off\sigma^2}{2}\snorm{\off\theta^*-\iter{\off\theta}{t}}^2\nonumber\\
    \nabla_{\off\theta}\cl(\iter{\theta}{t})^T(\iter{\off\theta}{t}-\off\theta^*)&\geq \cl(\iter{\theta}{t})- \cl(\iter{\on\theta}{t},\off\theta^*)+\frac{c_l\off\sigma^2}{2}\snorm{\off\theta^*-\iter{\off\theta}{t}}^2\label{eqn: off grad_optimaldiff bound}
\end{align}
Similarly, for the on-manifold direction, strong convexity leads to:
\begin{equation}
    \nabla_{\on\theta}\cl(\iter{\theta}{t})^T(\iter{\on\theta}{t}-\on\theta^*)\geq \cl(\iter{\theta}{t})- \cl(\on\theta^*,\iter{\off\theta}{t})+\frac{c_l\on\sigma^2}{2}\snorm{\on\theta^*-\iter{\on\theta}{t}}^2\label{eqn: on grad_optimaldiff bound}
\end{equation}
Also, from \eqref{eqn: step-size logistic}, \eqref{eqn: step-size w step}, \eqref{eqn: step-size A step}
\begin{equation}
   \breve\alpha = \left\{
\begin{array}{ll}
      \alpha & ,\alpha\leq {(c_u\on\sigma^2)}^{-1} \,\text{(logistic)}\\
      \alpha_1 & ,\alpha_1\leq {(c_u\on\sigma^2)}^{-1} \,\text{(w-step)}\\
      \alpha_2 \snorm{\iter{w}{\nicefrac{t+1}{2}}}^4 &, \alpha_2 \leq {(W\cdot c_u\on{\sigma}^2)}^{-1},\snorm{\iter{w}{\nicefrac{t+1}{2}}}^4<W, \,\forall t \,\text{(A-step)}
\end{array} 
\right. 
\label{eqn: general alpha}
\end{equation}
\begin{equation}
    \breve\alpha\leq (c_u\on\sigma^2)^{-1} \label{eqn: bound on general alpha}
\end{equation}
The parameter difference can be expressed in terms of the difference at the previous iteration:
\begin{align*}
    \norm{\iter{\theta}{t+1}_\perp-\theta^*_{\perp}}^2&=\snorm{\iter{\theta}{t}_\perp-\breve\alpha\nabla_{\off\theta}\cL(\iter{\theta}{t})-\theta^*_{\perp}}^2\\
    &=\snorm{\iter{\theta}{t}_\perp-\theta^*_\perp}^2-2\breve\alpha\nabla_{\off\theta}\cL(\iter{\theta}{t})(\iter{\theta}{t}_\perp-\theta^*_\perp)+\breve\alpha^2\snorm{\nabla_{\off\theta}\cL(\iter{\theta}{t})}^2\\
    \intertext{Substituting \eqref{eqn: off grad_optimaldiff bound}}
    &\leq \snorm{\iter{\theta}{t}_\perp-\theta^*_\perp}^2-2\breve\alpha\left(\cl(\iter{\theta}{t})- \cl(\iter{\on\theta}{t},\off\theta^*)+\frac{c_l\off\sigma^2}{2}\snorm{\iter{\theta}{t}_\perp-\theta^*_\perp}^2\right)+\breve\alpha^2\snorm{\nabla_{\off\theta}\cL(\iter{\theta}{t})}^2\\
    \intertext{Using  \eqref{eqn: gradient off upperbound}}
    &\leq \snorm{\iter{\theta}{t}_\perp-\theta^*_\perp}^2-2\breve\alpha\left(\cl(\iter{\theta}{t})- \cl(\iter{\on\theta}{t},\off\theta^*)+\frac{c_l\off\sigma^2}{2}\snorm{\iter{\theta}{t}_\perp-\theta^*_\perp}^2\right)\\
    &+\left(\cl(\iter{\theta}{t})-\cl(\iter{\on\theta}{t},\off\theta^*)\right)\left(1-\breve\alpha\frac{c_u\off\sigma^2}{2}\right)^{-1}\cdot{\breve\alpha}\\
    \intertext{Using \eqref{eqn: bound on general alpha} and $\nicefrac{\off\sigma}{\on\sigma}<1$}
    &=\snorm{\iter{\theta}{t}_\perp-\theta^*_\perp}^2\left(1-\breve\alpha c_l\off\sigma^2\right)+\underbrace{\left(\cL(\iter{\theta}{t}_{\perp},\theta_{\shortparallel})-\cL(\theta^*_{\perp},\theta_{\shortparallel})\right)}_{\leq0}\cdot\left(\underbrace{(1-\breve\alpha\frac{c_u\off\sigma^2}{2})^{-1}}_{\leq2 }-2\right)\cdot\breve\alpha\\
    \intertext{Using \eqref{eqn: bound on general alpha} and $c_l<c_u$ \eqref{eqn: hessian bounds}.}
    &\leq \snorm{\iter{\theta}{t}_\perp-\theta^*_\perp}^2\underbrace{\left(1-\frac{c_l\off\sigma^2}{c_u\on\sigma^2}\right)}_{\leq 1}
\end{align*}
Suppose we have $T$ total iterations then, inductively the equations accumulate over as:
\begin{align}
    \snorm{\iter{\theta}{T}_\perp-\theta^*_{\perp}}^2&\leq\snorm{\iter{\theta}{0}_\perp-\theta^*_\perp}^2\left(1-\frac{c_l\off\sigma^2}{c_u\on\sigma^2}\right)^T\leq\snorm{\iter{\theta}{0}_\perp-\theta^*_\perp}^2\exp\left(-T\frac{c_l\off\sigma^2}{c_u\on\sigma^2}\right)
\end{align}
Hence, we require $T\geq \frac{c_u\on\sigma^2}{c_l\off\sigma^2}\log\left(\frac{\snorm{\iter{\theta}{0}_\perp-\theta^*_\perp}}{\delta}\right)$ for $\mathcal{O}(\delta)$ distance from $\theta^*_{\perp}$.\\

Similarly, one can derive the bound for the $\shortparallel$ direction mimicking the exact steps as $\perp$ direction but instead using analogous equations \eqref{eqn: on grad_optimaldiff bound},\eqref{eqn: gradient on upperbound}.\\
$T\geq \frac{c_u\on\sigma^2}{c_u\on\sigma^2}\log\left(\frac{\snorm{\iter{\theta}{0}_\shortparallel-\theta^*_\shortparallel}}{\delta}\right)=\log\left(\frac{\snorm{\iter{\theta}{0}_\shortparallel-\theta^*_\shortparallel}}{\delta}\right)$ for $\mathcal{O}(\delta)$ distance from $\theta^*_{\shortparallel}$.
\begin{remark}
\label{remark: variable step-size}
    When deriving the bounds for $\snorm{\iter{\theta}{0}_\shortparallel-\theta^*_\shortparallel}$ vs $\snorm{\iter{\theta}{0}_\perp-\theta^*_\perp}$, the key point of difference is using the same step size $\breve\alpha\preceq \on\sigma^{-2}$ even though the strong convexity constants are different, $\propto \on\sigma^2$ for $\shortparallel$ direction and $\propto\off\sigma^2$ for $\perp$ direction. This introduces the ratio $\nicefrac{\off\sigma}{\on\sigma}$ in the $\perp$ case, while for $\shortparallel$ case the step size and strong-convexity parameter neutralize each other to 1. Hence, if we could enforce separate step size for $\perp,\shortparallel$ directions $\alpha_\perp\preceq \off\sigma^{-2},\alpha_\shortparallel\preceq \on\sigma^{-2}$. Then, we can get equivalent rates in both directions.
\end{remark}

\end{proof}
\begin{proof}[Proof of Theorem \ref{thm: loss convergence}]
\label{proof: thm loss convergence}
We want to minimize the loss $\cl(\theta)=\E\limits_{\sim x,y}\ell(y\cdot z)$. The minimum loss that can be attained has a natural lower bound $\min\limits_{\theta}\cl(\theta)\geq (1-\nu)\cdot\min\limits_{\theta}\E\limits_{\sim x,y|\emptyset}\ell(y\cdot z)+\nu\min\limits_{\theta}\E\limits_{\sim x,y|\neg\emptyset}\ell(y\cdot z)$.
Suppose we only optimize the loss w.r.t.  $\on\theta$ and $\off\theta=\bld0$, then a perfect classifier on $\shortparallel$ direction can distinguish $\on x$ with probability 1 or 0 loss, but only with $\sfrac{1}{2}$ probability on $\perp$ direction. In this case,
$\min\limits_{\on\theta}\cl(\theta)\geq \nu\log 2$. In general, the classifier isn't perfect, and $\off\theta$ can be fixed at some default value. Hence, the lower bound is controlled by $\nu\log2$ up to a constant $\min\limits_{\on\theta}\cl(\theta)\geq C=\Omega(\nu\log 2)$.
Consider two cases when the loss tolerance $\delta< C$ or $>C$.
\subsection*{Case 1: $\delta<C$}
From the progressive bounds in proof of Theorem \ref{thm: Progressive Bound} for any optimization iterate (logistic, w-step or A-step) induced in the identifiable parameter $\theta$, has a decremental loss:
\begin{equation}
    \cl(\iter{\theta}{t+1})-\cl(\iter{\theta}{t})\leq -(2c_u\on\sigma^2)^{-1}\snorm{\nabla_{\theta}\cl(\iter{\theta}{t})}^2 \label{eqn: progressive bound all case}
\end{equation}
\begin{equation}
    -2c_l\off{\sigma^2}\cdot\left(\cl(\iter{\theta}{t})-\cl(\theta^*)\right)\geq -\snorm{\nabla_{\theta}\cl(\iter{\theta}{t})}^2\tag{PL-inequality}\label{eqn: pl-inequality 1}
\end{equation}
\textit{Proof of PL inequality:} The \eqref{eqn: pl-inequality 1} is a consequence of strong convexity. Using strong convexity in the space of identifiable parameter $\theta$ Lemma \ref{lem:lipschitz/strong convex},\eqref{eqn: hessian bounds} we have:
\begin{align*}
 \cl(\theta)&\geq\cl(\iter{\theta}{t})+\nabla_{\theta}\cl(\iter{\theta}{t})^T(\theta-\iter{\theta}{t})+\frac{c_l}{2}(\theta-\iter{\theta}{t})^{T}\begin{pmatrix}
    \on\sigma^2\bld{I}_d& \bld{0}\\
    \bld{0}& \off\sigma^2\bld{I}_g
\end{pmatrix}(\theta-\iter{\theta}{t})\\
\cl(\theta)&\geq\cl(\iter{\theta}{t})+\nabla_{\theta}\cl(\iter{\theta}{t})^T(\theta-\iter{\theta}{t})+\frac{c_l\cdot\off{\sigma^2}}{2}\snorm{\theta-\iter{\theta}{t}}^2\\
\intertext{Minimizing both sides w.r.t $\theta$, happens for $\theta=\iter{\theta}{t}-\nabla_{\theta}\cl(\iter{\theta}{t}){(c_l\off\sigma^2)}^{-1}$}
\cl(\theta^*)&\geq\cl(\iter{\theta}{t})-\snorm{\nabla_{\theta}\cl(\iter{\theta}{t})}^2{(2c_l\off{\sigma}^2)}^{-1} \qed
\end{align*}
Hence, from the progressive bound, we have \eqref{eqn: progressive bound all case} :
\begin{align*}
    \cl(\iter{\theta}{t+1})&\leq \cl(\iter{\theta}{t})-(2c_u\on\sigma^2)^{-1}\snorm{\nabla_{\theta}\cl(\iter{\theta}{t})}^2\\
    \intertext{Subtracting $\cl(\theta^*)$ from both sides}
    \cl(\iter{\theta}{t+1})-\cl(\theta^*)&\leq \cl(\iter{\theta}{t})-\cl(\theta^*)-(2c_u\on\sigma^2)^{-1}\snorm{\nabla_{\theta}\cl(\iter{\theta}{t})}^2\\
    \intertext{Using \eqref{eqn: pl-inequality 1}}
    \cl(\iter{\theta}{t+1})-\cl(\theta^*)&\leq \left(\cl(\iter{\theta}{t})-\cl(\theta^*)\right)\cdot\left(1-\frac{c_l\off\sigma^2}{c_u\on\sigma^2}\right)
\end{align*}
Suppose we have $T$ total iterations; then, inductively, the equations accumulate over as:
\begin{align}
    \cl(\iter{\theta}{T})-\cl(\theta^*)&\leq\left(\cl(\iter{\theta}{0})-\cl(\theta^*)\right)\left(1-\frac{c_l\off\sigma^2}{c_u\on\sigma^2}\right)^T\leq\left(\cl(\iter{\theta}{0})-\cl(\theta^*)\right)\exp\left(-T\frac{c_l\off\sigma^2}{c_u\on\sigma^2}\right)
\end{align}
Hence, we require $T\geq \frac{c_u\on\sigma^2}{c_l\off\sigma^2}\log\left(\left(\cl(\iter{\theta}{0})-\cl(\theta^*)\right)\delta^{-1}\right)$ for $\mathcal{O}(\delta)$ error tolerance. This is rate $r_2$ in the thm statement.

\subsection*{Case 2: $\delta>C$}
The rate $r_2$ proved in the previous case is universal and holds for this case as well. However, we can obtain a better rate in this scenario.\\
In this case, the loss can attain value $\delta$ solely by optimizing $\on\theta$. Therefore, we will upper bound the original gradient descent loss sequence by a loss sequence solely dependant on updates of $\on\theta$, and we will see that because convergence is better on $\on\theta$ direction, we can get better rates.\\
Note that if $\cL(\theta)=C$, then using Jensen's inequality for convex functions: $\ell(\E\limits_{\sim x,y}y\cdot z)\leq \cL(\theta)=C$.
\begin{align*}
    \ln{(1+\exp(-\bE y\theta^Tx))}&\leq C\\
    \exp(-\bE y\theta^Tx)&\leq e^C-1\\
    \bE y\theta^Tx&\geq \ln(\frac{1}{e^{C}-1})\\
    \bE y(\Inner{\theta_{\shortparallel}}{x_{\shortparallel}}+\Inner{\theta_{\perp}}{x_{\perp}})&\geq \ln(\frac{1}{e^{C}-1})\\
    \pi(\Inner{\theta_{\shortparallel}}{\iter{\mu}{1}_{\shortparallel}}+\Inner{\theta_{\perp}}{\iter{\mu}{1}_{\perp}})-(1-\pi)(\Inner{\theta_{\shortparallel}}{\iter{\mu}{-1}_{\shortparallel}}+\Inner{\theta_{\perp}}{\iter{\mu}{-1}_{\perp}})&\geq \ln(\frac{1}{e^{C}-1})
\end{align*}
Note that the above can always be satisfied by $\on\theta=c'\cdot\on\mu^{(1)}-c''\cdot\on\mu^{(-1)}$ for appropriate choice of constants $c',c''$. Hence for a fixed $\off\theta$ there always exists some $\on{\tilde{\theta}}$ such that $\cl(\on{\tilde{\theta}},\off\theta)<C\implies \cl(\on{\tilde{\theta}},\off\theta)-\cl(\theta^*)<C$. This means fixing $\off\theta=\iter{\off\theta}{0}$ at initialization, there exists $\on{\tilde{\theta}}$ as well which satisfies: 
\begin{equation}
\min\limits_{\on\theta}\cL(\on{\theta},\iter{\theta_{\perp}}{0})-\cl(\theta^*)\leq\cL(\on{\tilde{\theta}},\iter{\theta_{\perp}}{0})-\cl(\theta^*)<C\label{eqn: attainability via on theta} 
\end{equation} 
Using lemma \ref{lem: coordinate wise optimality} we have:
\begin{align}
    \cL(\iter{\theta}{T})&\leq\cL(\iter{\theta_{\shortparallel}}{T},\iter{\theta_{\perp}}{0})\nonumber\\
    \intertext{Subtracting $\cl(\theta^*)$ both sides}
     \cL(\iter{\theta}{T})-\cl(\theta^*)&\leq\cL(\iter{\theta_{\shortparallel}}{T},\iter{\theta_{\perp}}{0})-\cl(\theta^*)\nonumber\\
     &= \cL(\iter{\theta_{\shortparallel}}{T},\iter{\theta_{\perp}}{0})-\min\limits_{\on\theta}\cL(\on{\theta},\iter{\theta_{\perp}}{0})+\min\limits_{\on\theta}\cL(\on{\theta},\iter{\theta_{\perp}}{0})-\cl(\theta^*)\nonumber\\
     &\leq (\delta-C)+C=\delta\label{eqn: loss bound >C}
\end{align}
Hence, if we find a $T$ for which $\cL(\iter{\theta_{\shortparallel}}{T},\iter{\theta_{\perp}}{0})-\min\limits_{\on\theta}\cL(\on{\theta},\iter{\theta_{\perp}}{0})\leq \delta-C$ then we are done.\\
From lemma \ref{lem: coordinate wise optimality},\eqref{eqn: pb on component} we have a progressive bound on the off-manifold component as follows:
\begin{equation}
    \cl(\iter{\on\theta}{t+1},\iter{\off\theta}{0})\leq \cl(\iter{\on\theta}{t},\iter{\off\theta}{0}) - c_p(2c_u\on\sigma^2)^{-1}\snorm{\nabla_{\on\theta}\cl(\iter{\theta}{t})}^2
    \label{eqn: pb on component 2}
\end{equation}
\begin{equation}
    -2c'_pc_l\on{\sigma^2}\cdot\left(\cl(\iter{\on\theta}{t},\iter{\off\theta}{0})-\min\limits_{\on\theta}\cL(\on{\theta},\iter{\theta_{\perp}}{0})\right)\geq -\snorm{\nabla_{\on\theta}\cl(\iter{\theta}{t})}^2\tag{PL-inequality $\on\theta$}\label{eqn: pl-inequality 2}
\end{equation}
\textit{Proof of PL inequality:} The \eqref{eqn: pl-inequality 1} is a consequence of strong convexity. Using strong convexity w.r.t. $\on\theta$ Lemma \ref{lem:lipschitz/strong convex},\eqref{eqn: hessian bound on-manifold} we have:
\begin{align*}
 \cl(\on\theta,\iter{\off\theta}{0})&\geq\cl(\iter{\on\theta}{t},\iter{\off\theta}{0})+\nabla_{\on\theta}\cl(\iter{\on\theta}{t},\iter{\off\theta}{0})^T(\on\theta-\iter{\on\theta}{t})+\frac{c_l\cdot\on{\sigma^2}}{2}\snorm{\on\theta-\iter{\on\theta}{t}}^2\\
\intertext{Minimizing both sides w.r.t $\on\theta$, happens for $\on\theta=\iter{\on\theta}{t}-\nabla_{\on\theta}\cl(\iter{\on\theta}{t},\iter{\off\theta}{0}){(c_l\on\sigma^2)}^{-1}$}
\min\limits_{\on\theta}\cl(\on\theta,\iter{\off\theta}{0})&\geq\cl(\iter{\on\theta}{t},\iter{\off\theta}{0})-\snorm{\nabla_{\on\theta}\cl(\iter{\on\theta}{t},\iter{\off\theta}{0})}^2{(2c_l\on{\sigma}^2)}^{-1} \\
\intertext{From arguments like $(i)$ in proof of lemma \ref{lem: coordinate wise optimality}, we know $c'_p\snorm{\nabla_{\on\theta}\cl(\iter{\on\theta}{t},\iter{\off\theta}{0})}^2=\snorm{\nabla_{\on\theta}\cl(\iter{\theta}{t})}^2$ for some proportionality constant $c'_p$}
\min\limits_{\on\theta}\cl(\on\theta,\iter{\off\theta}{0})&\geq\cl(\iter{\on\theta}{t},\iter{\off\theta}{0})-\snorm{\nabla_{\on\theta}\cl(\iter{\theta}{t})}^2{(2c'_pc_l\on{\sigma}^2)}^{-1}\qed
\end{align*}

Hence, from the progressive bound, we have \eqref{eqn: pb on component 2} :
\begin{align*}
    \cl(\iter{\on\theta}{t+1},\iter{\off\theta}{0})&\leq \cl(\iter{\on\theta}{t},\iter{\off\theta}{0}) - c_p(2c_u\on\sigma^2)^{-1}\snorm{\nabla_{\on\theta}\cl(\iter{\theta}{t})}^2\\
    \intertext{Subtracting $\min\limits_{\on\theta}\cl(\on\theta,\iter{\off\theta}{0})$ from both sides}
    \cl(\iter{\on\theta}{t+1},\iter{\off\theta}{0})-\min\limits_{\on\theta}\cl(\on\theta,\iter{\off\theta}{0})&\leq \cl(\iter{\on\theta}{t},\iter{\off\theta}{0})-\min\limits_{\on\theta}\cl(\on\theta,\iter{\off\theta}{0}) - c_p(2c_u\on\sigma^2)^{-1}\snorm{\nabla_{\on\theta}\cl(\iter{\theta}{t})}^2\\
    \intertext{Using \eqref{eqn: pl-inequality 2}}
    \cl(\iter{\on\theta}{t+1},\iter{\off\theta}{0})-\min\limits_{\on\theta}\cl(\on\theta,\iter{\off\theta}{0})&\leq \left(\cl(\iter{\on\theta}{t},\iter{\off\theta}{0})-\min\limits_{\on\theta}\cl(\on\theta,\iter{\off\theta}{0})\right)\cdot\left(1-\frac{c_pc_lc'_p}{c_u}\right)
\end{align*}
Suppose we have $T$ total iterations; then, inductively, the equations accumulate over as:
\begin{align*}
    \cl(\iter{\on\theta}{T},\iter{\off\theta}{0})-\min\limits_{\on\theta}\cl(\on\theta,\iter{\off\theta}{0})&\leq\left(\cl(\iter{\theta}{0})-\min\limits_{\on\theta}\cl(\on\theta,\iter{\off\theta}{0})\right)\left(1-\frac{c_pc_lc'_p}{c_u}\right)^T\\
    &\leq\left(\cl(\iter{\theta}{0})-\min\limits_{\on\theta}\cl(\on\theta,\iter{\off\theta}{0})\right)\exp\left(-T\frac{c_pc_lc'_p}{c_u}\right)\\
    &\leq \left(\cl(\iter{\theta}{0})-\cl(\theta^*)\right)\exp\left(-T\frac{c_pc_lc'_p}{c_u}\right)
\end{align*}
Hence, we require $T\geq \frac{c_u}{c_pc_lc'_p}\log\left(\left(\cl(\iter{\theta}{0})-\cl(\theta^*)\right)(\delta-C)^{-1}\right)$ for $\cl(\iter{\on\theta}{T},\iter{\off\theta}{0})-\min\limits_{\on\theta}\cl(\on\theta,\iter{\off\theta}{0})<\delta-C$. Hence, validating the series of equations \eqref{eqn: loss bound >C} and implying $\cl(\iter{\theta}{T})-\cl(\theta^*)\leq \delta$.
This gives us the rate $r_1$ in theorem \ref{thm: loss convergence}. Note that the rate $r_2$ holds regardless of $\delta$ value. The final rate is the minimum of the two rates.
\end{proof}
\section{Additional Experimental Details}
\label{sec: exp details}
A learning rate of $10^{-3}$ with default ADAM parameters were used for clean training. For generating PGD attacks a step size of $\frac{2}{255}$ with $\floor{\epsilon\cdot\frac{255}{2}}$ ($\epsilon$ is the attack strength) attack iterations and 1 restart was used. For KFAC preconditioner, the default hyper-parameters were used.\\
\textbf{Compute resource}: A single NVIDIA V100 gpu was used, requiring 2hrs, 3.5 hrs (MNIST, FMNIST 1000 epochs) per run for ADAM and ADAM+KFAC respectively. For CIFAR10 ADAM experiments it took ~6 hrs per run (4000 epochs). Attack time included.
\begin{table}[!ht]
    \centering
    \begin{tabular}{c}
         Layers\\\hline
       Conv2d(1, 16, 4, stride=2, padding=1), ReLU\\
      Conv2d(16, 32, 4, stride=2, padding=1), ReLU\\
       Linear(32*7*7,100), ReLU\\
        Linear(100, 10)\\
    \end{tabular}
    \caption{NN architecture FMNIST/MNIST.}
    \label{tab:standard architecturel}
\end{table}

\begin{table}[!ht]
    \centering
    \begin{tabular}{c}
         Layers\\\hline
       Conv2d(1, 16, 4, stride=2, padding=1), BatchNorm2d(16), ReLU\\
      Conv2d(16, 32, 4, stride=2, padding=1), BatchNorm2d(32), ReLU\\
       Linear(32*7*7,100), BatchNorm2d(100), ReLU\\
        Linear(100, 10)\\
    \end{tabular}
    \caption{NN architecture FMNIST/MNIST with BN.}
    \label{tab:bn architecturel}
\end{table}
\begin{table}[!ht]
    \centering
    \begin{tabular}{c}
         Layers\\\hline
       Conv2d(3, 128, 5, padding=2), ReLU\\
    Conv2d(128, 128, 5, padding=2),ReLU,\\
MaxPool2d(2,2), Conv2d(128, 256, 3, padding=1),ReLU,\\
Conv2d(256, 256, 3, padding=1),ReLU,\\
MaxPool2d(2,2), Flatten(),Linear(256*8*8,1024),ReLU,\\
Linear(1024, 512),ReLU\\
Linear(512, 10)
    \end{tabular}
    \caption{NN architecture CIFAR10.}
    \label{tab:cifar architecturel}
\end{table}

\begin{table}[!ht]
    \centering
    \begin{tabular}{c}
         Layers\\\hline
       Conv2d(3, 128, 5, padding=2), ReLU\\
       BatchNorm2d(128)\\
    Conv2d(128, 128, 5, padding=2),ReLU,\\
    BatchNorm2d(128),\\
MaxPool2d(2,2), Conv2d(128, 256, 3, padding=1),ReLU,\\
BatchNorm2d(256),\\
Conv2d(256, 256, 3, padding=1),ReLU,\\
BatchNorm2d(256),\\
MaxPool2d(2,2), Flatten(),Linear(256*8*8,1024),ReLU,\\
BatchNorm1d(1024)\\
Linear(1024, 512),ReLU\\
BatchNorm1d(512)\\
Linear(512, 10)
    \end{tabular}
    \caption{NN architecture CIFAR10 with BN.}
    \label{tab:cifar bn architecturel}
\end{table}

\end{document}